\pdfoutput=1
\documentclass{article}

\usepackage{iclr2027_conference,times}

\usepackage[T1]{fontenc}
\usepackage{amsmath}
\usepackage{amssymb}
\usepackage{amsfonts}
\usepackage{amsthm}
\usepackage{nicefrac}
\usepackage{microtype}
\usepackage{bm}
\usepackage{enumitem}
\usepackage{multirow}
\usepackage{algorithm}
\usepackage{algorithmic}
\usepackage{booktabs}
\usepackage{colortbl}
\usepackage{xcolor}
\usepackage{graphicx}
\usepackage{hyperref}
\usepackage{url}
\hypersetup{
  pdftitle={When More Foundation Models Means Less: Diagnosing and Addressing Multi-View Fusion Failure},
  pdfauthor={Yibo Liu and Bowen Jiang},
  pdfkeywords={foundation models, multi-view learning, encoder selection, kernel alignment},
  colorlinks=true,
  linkcolor=black,
  citecolor=black,
  urlcolor=black
}

\newtheorem{theorem}{Theorem}
\newtheorem{proposition}{Proposition}
\newtheorem{corollary}{Corollary}
\newtheorem{lemma}{Lemma}
\newtheorem{assumption}{Assumption}

\theoremstyle{definition}
\newtheorem{definition}{Definition}

\theoremstyle{remark}

\newcommand{\Ms}{\mathcal{M}}
\newcommand{\Xs}{\mathcal{X}}
\newcommand{\Ys}{\mathcal{Y}}
\newcommand{\Ps}{P_{\mathrm{S}}}
\newcommand{\Pt}{P_{\mathrm{T}}}
\newcommand{\muS}{\mu_{\mathrm{S}}}
\newcommand{\muT}{\mu_{\mathrm{T}}}
\newcommand{\hmuS}{\hat\mu_{\mathrm{S}}}
\newcommand{\hmuT}{\hat\mu_{\mathrm{T}}}
\newcommand{\Wone}{W_1}
\newcommand{\hpi}{\hat\pi}
\newcommand{\hk}{\hat k^\star}
\newcommand{\Ldelta}{L_\delta}
\newcommand{\barR}{\bar R}
\newcommand{\barD}{\bar D}
\newcommand{\barC}{\bar C}
\newcommand{\Dstop}{\Delta^{\mathrm{stop}}}
\newcommand{\DS}{\Delta_{\mathrm{S}}}
\newcommand{\DT}{\Delta_{\mathrm{T}}}
\newcommand{\sd}[1]{{\scriptsize$\pm$#1}}

\def\eg{\emph{e.g.}}
\def\Eg{\emph{E.g.}}
\def\etal{\emph{et al.}}

\usepackage{xr-hyper}

\title{
When More Foundation Models Means Less: Diagnosing and Addressing Multi-View Fusion Failure
}

\author{
Yibo Liu\thanks{Corresponding author.} \quad Bowen Jiang \\
Hainan Campus, Beijing University of Posts and Telecommunications \\
Lingshui, Hainan, China \\
\texttt{liuyibo@bupt.edu.cn} \quad \texttt{1729773236@bupt.edu.cn}
}

\iclrfinalcopy

\let\originalpaperlabel\label
\newcommand{\mainpaperlabel}[1]{%
  \originalpaperlabel{#1}\originalpaperlabel{main-#1}}
\newcommand{\supppaperlabel}[1]{%
  \originalpaperlabel{#1}\originalpaperlabel{supp-#1}}

\begin{document}

\maketitle
\lhead{Preprint}
\let\label\mainpaperlabel

\begin{abstract}
\textcolor{black}{Multi-view learning is motivated by the assumption that complementary views can provide richer and more discriminative representations. With the growing availability of pretrained foundation-model encoders, candidate views are no longer limited to a small, fixed set of sensors or hand-crafted features. In this setting, each frozen pretrained encoder can induce a distinct representation view, and practitioners may choose from a large, heterogeneous pool of candidate encoders. The design problem therefore shifts from fusing a small, predefined set of views to selecting which encoders to combine, and how many, from a large candidate pool. Empirically, we find that downstream performance does not improve monotonically as more encoders are fused: gains typically saturate after a small subset of encoders is included. These observations motivate explicitly selecting both the composition and the size of the fused view set. We formulate this problem as view-set composition: jointly determining which candidate views to fuse and how many to include. We then propose KAGES (Kernel-Alignment Greedy Encoder Selector), a label-aware selector that requires no downstream classifier training during selection and greedily adds the encoder yielding the largest marginal improvement in alignment between the fused representation kernel and the label kernel.  KAGES consistently outperforms competing selection methods as well as full fusion, and approaches exhaustive oracle selection. We further observe similar selection gains in image retrieval and in the fusion of frozen language-model representations. Overall, our results show that effective multi-view learning over foundation-model pools depends not on fusing more encoders, but on selecting a compact and complementary set of views.}
Code and results are available at \url{https://github.com/yibol9768-alt/Quantifying-Representation-Reliability}.
\end{abstract}

\section{Introduction}
\label{sec:intro}

\begin{figure}[t]
\centering
\includegraphics[width=\textwidth]{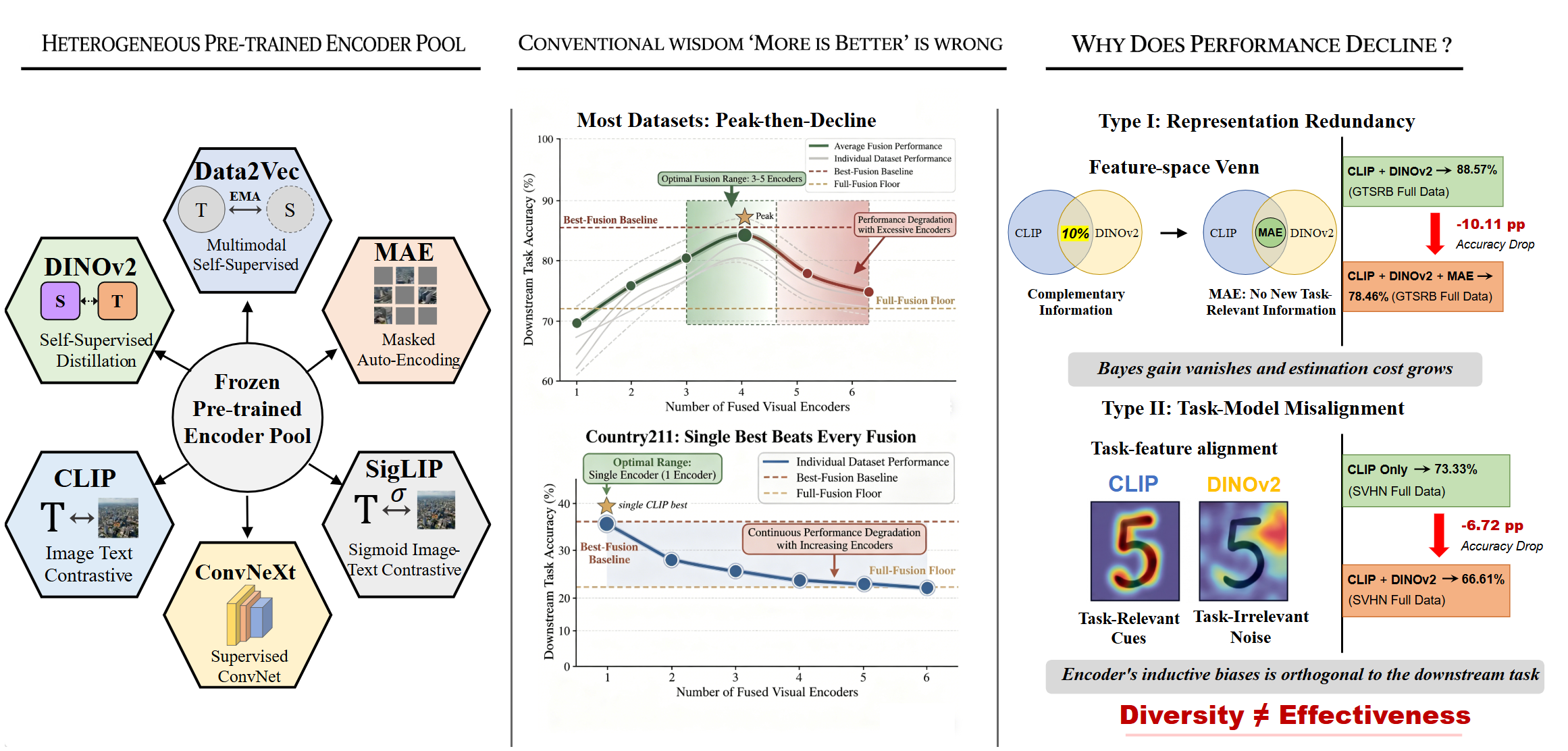}
\caption{\textbf{View-set composition in frozen foundation-model fusion.} \textbf{Left:} a heterogeneous pool of frozen vision backbones. \textbf{Middle:} downstream accuracy is non-monotonic in the number of fused views, with most datasets peaking at $3$ to $5$ encoders and declining, while a single task-aligned encoder beats every fusion on Country211. \textbf{Right:} two failure modes behind the decline. Type I (\emph{representation redundancy}): a new encoder repeats information already captured. Type II (\emph{task-model misalignment}): a representationally distinct encoder injects task-irrelevant cues, so diversity alone is not effectiveness. The marginal-utility theory of Section~\ref{sec:peak-theory} explains the shape from a single gain-cost crossing (Theorem~\ref{thm:peak}); the two failure modes are formalised in supplementary App.~\ref{supp-app-y:proof-failure-modes}.}
\label{fig:overview}
\originalpaperlabel{main-fig:overview}
\end{figure}

\textcolor{black}{Multi-view learning has traditionally assumed a small, predefined set of views arising from different sensors, modalities, or feature representations, motivated by the premise that complementary views can provide information beyond that available from any single view~\citep{xu2013multi,li2019mvlearn,trosten2023mvclust}. Foundation models fundamentally expand the space of candidate views. In the frozen-encoder setting, each pretrained encoder induces a distinct representation of the same input, and practitioners can draw from large, heterogeneous model pools spanning architectures, pretraining objectives, supervision sources, and modalities~\citep{radford2021learning,oquab2023dinov2,he2022masked,liu2022convnet,dubey2024llama3,yang2024qwen25}. The view set therefore ceases to be merely a fixed input to the learning problem and becomes a design variable itself: beyond deciding \emph{how} to fuse a predefined set of views, one must determine \emph{which} encoder-induced views to combine and \emph{how many} to retain. We formulate this problem as \emph{view-set composition}. While complementarity motivates combining multiple views, it does not by itself provide an operational criterion for selecting a subset from a large heterogeneous pool or for determining when additional views cease to be beneficial.
}

\textcolor{black}{Recent work on multi-encoder vision systems, including Eagle~\citep{shi2024eagle}, Cambrian-1~\citep{tong2024cambrian}, BRAVE~\citep{kar2024brave}, and MoVA~\citep{zong2024mova}, has demonstrated the value of combining representations from multiple pretrained encoders, with some methods further introducing routing or search mechanisms to determine how encoder features are used. These methods typically focus on designing effective fusion or routing strategies over a fixed or manually specified set of encoders, leaving view-set composition over large heterogeneous encoder pools largely unexplored.}

{\color{black}The importance of view-set composition becomes apparent as the encoder pool grows.
Even along a deliberately favorable fusion path that places previously successful and complementary encoders early, downstream accuracy is often non-monotonic as additional views are added. Encoders added early in the fusion path can improve performance, whereas later encoders may contribute information already captured by the current set or signals that are only weakly aligned with the target task. In either case, the fused representation gains additional view-specific information without a commensurate increase in task-relevant signal, so adding more encoders can ultimately reduce rather than improve downstream accuracy.

This observation does not imply that effective fusion requires a small fixed number of encoders. Rather, it shows that the peak observed along a fusion path is order-dependent. A fixed encoder-addition path may include informative views early and redundant or task-misaligned views later; a task-aware selector can instead prioritize high-value views while deferring less beneficial additions. We therefore cast view-set composition as an \emph{ordering problem}: a high-quality ordering should prioritize useful encoders so that its prefixes provide competitive fused sets under different compute budgets.

Existing selection strategies are insufficient for this goal. Per-encoder transferability scores~\citep{you2021logme,bao2019hscore,pandy2022gbc} evaluate individual encoders independently and ignore their interactions with already-selected encoders. Diversity-based criteria encourage representational difference, but \emph{different} is not necessarily \emph{useful} for the downstream task. Exhaustive or validation-based subset search can identify high-performing combinations, but requires training downstream predictors for many candidate subsets and does not scale well to large encoder pools. What is needed is a training-free, set-level criterion that scores the marginal utility of each candidate relative to the current fused set.

To address this challenge, we propose \emph{KAGES} (Kernel-Alignment Greedy Encoder Selector), a label-aware method for view-set composition over frozen foundation-model encoders. KAGES defines the utility of a fused set as the alignment between its representation kernel and the label kernel. At each step, it adds the candidate encoder with the largest marginal gain in this set-level utility. The resulting procedure produces a task-aligned encoder ordering without requiring downstream classifier training, combinatorial subset search, or fusion-architecture tuning.

The kernel-alignment objective admits a natural relevance--redundancy interpretation. In particular, the marginal gain favors candidates that are well aligned with the labels while exhibiting limited kernel overlap with already-selected encoders. Under linear-kernel fusion by feature concatenation, the fused kernel is additive across views, allowing KAGES to evaluate marginal gains directly at the kernel level rather than retraining a downstream model for each candidate subset (Section~\ref{sec:ker-evid}). The same set-level formulation also admits a conditional fixed-budget guarantee: if the pure alignment utility is monotone and has submodularity ratio at least $\gamma$, then, for any budget $k$, the greedy prefix of size $k$ achieves a $(1 - e^{-\gamma})$ approximation to the optimal size-$k$ subset (Theorem~\ref{thm:y:weak-submod}, Corollary~\ref{cor:surrogate-peak}).

Extensive evaluations on image-classification benchmarks show that KAGES consistently outperforms competing multi-view selection methods and full fusion, substantially improving over diversity-based selection while approaching oracle selection. These gains extend beyond classification to image retrieval and frozen large-language-model fusion.
\paragraph{Contributions.}
\begin{itemize}[nosep,leftmargin=*]

  \item \textbf{A new framing.}
  We recast large-pool frozen foundation-model fusion as \emph{multi-view learning at scale} and formulate \emph{view-set composition}---which encoders to fuse and how many---as a central design problem.

  \item \textbf{A unified characterisation, with theory.}
  We characterise the failure of the more-is-better assumption in this regime: a gain--cost crossing (Theorem~\ref{thm:peak}) provides a mechanism for an order-dependent peak, while empirical downstream performance subsequently plateaus and, on structured-symbol tasks, can decline even under an oracle ordering.

  \item \textbf{A scalable, set-level selection method.}
  We propose \emph{KAGES}, a label-aware selector that evaluates candidate encoders through their marginal contribution to a set-level kernel-alignment objective, avoiding repeated fitting of downstream predictors for candidate subsets. After per-encoder kernel construction, each candidate is evaluated in $O(n^2)$ time per greedy step, independent of encoder dimensionality. Under the stated monotonicity and submodularity-ratio conditions, each size-$k$ greedy prefix achieves a conditional $(1-e^{-\gamma})$ approximation to the optimal size-$k$ subset. KAGES remains effective in our largest evaluated pools of $M=15$ candidates.

  \item \textbf{Broad empirical validation.}
  We validate KAGES across image classification, image retrieval, and frozen large-language-model fusion, and observe the peak-then-decline phenomenon in both image classification and frozen-LLM fusion. Across these settings, KAGES produces stronger prefixes than diversity-based selection and often outperforms full fusion, with the largest gains in the main classification protocols, while approaching oracle selection.
\end{itemize}
}
\section{Peak-then-Decline and the Need for View-Set Composition}
\label{sec:phenomenon}

We now make the failure of complementarity precise. Once redundancy and finite-sample cost overtake the fresh task signal a new encoder brings along an ordering, every further view lowers the accuracy along that path. We first cast large-pool fusion as the search for a high-quality ordering (Section~\ref{sec:problem}), then give a marginal-utility account in which a single gain--cost crossing forces an order-dependent peak (Section~\ref{sec:peak-theory}, Theorem~\ref{thm:peak}). We then document the resulting peak-then-decline across five recognition regimes and three sample sizes on a path deliberately built to favour the ``more is better'' hypothesis (Sections~\ref{sec:diag-setup}--\ref{sec:why-decline}); the path is not our method but evidence that the peak is structural and leaves a large improvement open, which KAGES realises in Section~\ref{sec:method}.

\subsection{Large-pool fusion as an ordered path}
\label{sec:problem}

Let $\Ms = \{m_1, \ldots, m_M\}$ be a fixed pool of frozen pre-trained encoders. For input $x$, encoder $m_i$ produces a feature vector $F_{m_i}(x) \in \mathbb{R}^{d_{m_i}}$. Given a downstream classification task with label $Y$, a subset $S \subseteq \Ms$ is combined into a fused feature $F_S(x)$ by a fusion operator (defined in Section~\ref{sec:diag-setup}); a lightweight downstream classifier $f_\theta$ is then trained on a labelled set of size $n$. For the theoretical analysis below, let
\[
\bar F_S(x) := \bigl(F_m(x)\bigr)_{m \in S}
\]
denote the joint collection of the selected encoder features.

An ordering $\pi = (\pi_1, \ldots, \pi_K)$ of $K \le M$ encoders induces nested prefixes
\begin{equation}
\label{eq:nested-prefix}
S_k^\pi \;=\; \{m_{\pi_1}, \ldots, m_{\pi_k}\},
\qquad k = 1, \ldots, K,
\qquad S_0^\pi := \varnothing.
\end{equation}
For this ordering, the peak accuracy along the path is
\begin{equation}
\label{eq:peak-defs}
P_n(\pi) \;=\; \max_{1 \le k \le K}\,\mathrm{Acc}_n\bigl(S_k^\pi\bigr).
\end{equation}
The peak height $P_n(\pi)$ is a property of the ordering, not only of the underlying set: two permutations of the same encoders induce different nested prefixes and can therefore attain different peaks. Writing $R_n(S)$ for the corresponding $0$--$1$ test risk, $\mathrm{Acc}_n(S)=1-R_n(S)$, so maximising peak accuracy is equivalent to minimising the risk along the path. Section~\ref{sec:peak-theory} writes this minimum path risk in terms of a maximum prefix sum of conditional net marginal utilities along $\pi$ (Proposition~\ref{prop:peakgain}). View-set composition is the problem of constructing an ordering whose prefix accuracy curve $k \mapsto \mathrm{Acc}_n(S_k^\pi)$ stays as high as possible at every cardinality.

\subsection{A marginal-utility view of peak-then-decline}
\label{sec:peak-theory}

For a fixed ordering $\pi$, the performance peak is determined by the sequence of marginal gains and costs along its prefixes. Writing $S_k = S_k^\pi$ and $m_k := m_{\pi_k}$ for brevity, so that $S_k = S_{k-1} \cup \{m_k\}$, decompose the finite-sample test risk into Bayes and excess-risk parts,
\begin{equation}
\label{eq:risk-decomp}
R_n(S) = R^\star(S) + E_n(S),\quad
g_k := R^\star(S_{k-1}) - R^\star(S_k) \ge 0,\quad
c_k := E_n(S_k) - E_n(S_{k-1}),
\end{equation}
where $R^\star(S)$ is the Bayes risk given the joint representation $\bar F_S$, and $E_n(S)$ is the corresponding excess-risk term. Subtraction gives the one-step identity
\[
R_n(S_k) - R_n(S_{k-1}) = c_k - g_k:
\]
adding encoder $m_k$ improves performance exactly when $g_k > c_k$ (supplementary App.~\ref{supp-app-y:proof-risk}). Under log loss, the Bayes gain is a conditional mutual information,
\begin{equation}
\label{eq:gk-cmi}
g_{k,\log} \;=\; I(F_{m_k};\, Y \mid \bar F_{S_{k-1}}),
\end{equation}
the information-theoretic counterpart of the algebraic expansion in Proposition~\ref{prop:expansion}. This identity shows why ordering matters. The same candidate can have high gain when added before redundant views and low gain after its information has already been covered. A task-aware selector therefore raises the attainable peak by arranging the path so that large positive marginal utilities appear early.

The Bayes gain is non-increasing along a submodular greedy path (supplementary App.~\ref{supp-app-y:proof-greedy}). An OLS linear-probe instance gives
\[
\mathbb{E}\, R_n(S_k)
=
\nu_k^2 \frac{n-1}{n-k-1};
\]
once all signal coordinates have been included, the resulting pure-noise tail has non-decreasing excess-risk increments $c_k$ at fixed $n$ (supplementary App.~\ref{supp-app-y:proof-peak}, Step~4). These observations motivate the monotonicity assumptions in the following conditional single-crossing result.

\begin{theorem}[Order-specific gain--cost crossing]
\label{thm:peak}
Fix an ordering $\pi$ and its nested prefixes $S_1^\pi, \ldots, S_K^\pi$. Suppose $g_k$ is non-increasing along this path, $c_k$ is non-decreasing, and $a_k := g_k - c_k$ satisfies $a_1 > 0$, $a_K < 0$, with no zero-margin ties (margin $\eta := \min_k |a_k| > 0$). Let
\[k^\star(\pi) := \max\{k : a_k > 0\}.\]
Then $a_k$ changes sign exactly once, between $k^\star(\pi)$ and $k^\star(\pi)+1$, and, by the one-step identity,
\[R_n(S_0^\pi)>R_n(S_1^\pi)>
\cdots>R_n(S_{k^\star(\pi)}^\pi)<R_n(S_{k^\star(\pi)+1}^\pi)<\cdots<R_n(S_K^\pi).\]
Thus, risk decreases strictly up to $k^\star(\pi)$ and increases strictly afterwards. The peak value attained by this path is order-specific: changing the ordering changes the sequence $\{g_k, c_k\}$, and can therefore change both the peak location $k^\star(\pi)$ and the peak height $P_n(\pi)$.
\end{theorem}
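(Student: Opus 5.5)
The argument has two parts: show that the net margin $a_k = g_k - c_k$ is monotone, then convert sign information about $a_k$ into strict risk comparisons using the one-step identity $R_n(S_k) - R_n(S_{k-1}) = c_k - g_k = -a_k$.

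\textbf{Monotonicity of the margin.} Since $g_k$ is non-increasing and $c_k$ is non-decreasing along the fixed path $\pi$, for every $k<K$ we have
\[
a_{k+1} - a_k = (g_{k+1} - g_k) - (c_{k+1} - c_k) \le 0 .
\]
So $a_1 \ge a_2 \ge \cdots \ge a_K$.

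\textbf{Single sign change.} The set $\{k : a_k > 0\}$ contains $1$ because $a_1 > 0$, so $k^\star(\pi)$ is well defined. It satisfies $k^\star(\pi) < K$ because $a_K < 0$. By monotonicity this set is an initial segment: if $a_j > 0$ and $i < j$, then $a_i \ge a_j > 0$. Hence $a_k > 0$ for $k \le k^\star(\pi)$ and $a_k \le 0$ for $k > k^\star(\pi)$. The no-ties margin $\eta = \min_k |a_k| > 0$ excludes $a_k = 0$, so in fact $a_k \le -\eta < 0$ for every $k > k^\star(\pi)$. The sign therefore changes exactly once, between $k^\star(\pi)$ and $k^\star(\pi)+1$.

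\textbf{Risk comparisons.} By the one-step identity, $R_n(S_k) < R_n(S_{k-1})$ whenever $a_k > 0$, i.e.\ for $1 \le k \le k^\star(\pi)$; this includes the step from $S_0^\pi = \varnothing$. Likewise $R_n(S_k) > R_n(S_{k-1})$ for $k^\star(\pi) < k \le K$. Chaining these inequalities gives the displayed strict-decrease/strict-increase pattern. It follows that the minimum path risk, equivalently the peak accuracy $P_n(\pi)$, is attained uniquely at $k^\star(\pi)$.

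\textbf{Order dependence.} Recall $g_k = R^\star(S_{k-1}) - R^\star(S_k)$, and similarly $c_k$, are defined through the prefixes $S_{k-1}^\pi$ and $S_k^\pi$, which depend on $\pi$. A different ordering $\pi'$ therefore generates different sequences $\{g_k, c_k\}$, and hence possibly a different $k^\star(\pi')$ and a different $P_n(\pi')$. To make "can change" concrete, I would point to the conditional-mutual-information form $g_{k,\log} = I(F_{m_k}; Y \mid \bar F_{S_{k-1}})$ and give a small example. Take two identical encoders and one independent informative encoder: placing the duplicate second gives $g_2 = 0$, whereas placing the independent encoder second gives $g_2 > 0$. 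The two orderings then yield different crossing points.

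\textbf{Main obstacle.} The core argument is elementary. The only delicate part is this last, existential claim, since the theorem asserts only possibility and a single explicit example suffices. The one other point needing care is the boundary bookkeeping: the case $k = 1$ relies on the convention $S_0^\pi = \varnothing$ so that the one-step identity still applies.
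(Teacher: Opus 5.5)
Your proposal is correct and follows the paper's proof (Steps 1 and 5 of its appendix) essentially step for step: the monotonicity hypotheses make $a_k$ non-increasing, the margin $\eta>0$ gives $a_k\ge\eta$ for $k\le k^\star(\pi)$ and $a_k\le-\eta$ afterwards, and chaining the one-step identity $R_n(S_k)-R_n(S_{k-1})=-a_k$ yields the strict decrease/increase pattern. The paper gives no argument for the order-dependence clause and treats it as a remark, so your duplicate-encoder example is extra; as stated it only shows that $g_2$ changes, not that the peak does. In particular, the ordering that places the duplicate second has $g_2=0<g_3$, which violates the theorem's non-increasing-$g_k$ hypothesis, so to claim a different peak location or height you would need to fix the $c_k$ and compare the two paths' risk minimizers directly.
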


Selection acts on the \emph{entire} marginal-utility sequence: a better ordering front-loads larger positive net marginal utilities and can achieve a higher peak before those positive net utilities are exhausted. Writing the path peak as a maximum prefix sum makes this precise.

\begin{proposition}[Peak lift by front-loading net utility]
\label{prop:peakgain}
For any ordering $\pi$, let $a_k^\pi := g_k^\pi - c_k^\pi$. Telescoping the one-step identity from $i = 1$ to $k$ yields
\[\min_{1 \le k \le K} R_n(S_k^\pi)=R_n(S_0^\pi)-\max_{1 \le k \le K}\sum_{i=1}^{k}a_i^\pi.\]
Consequently, an ordering $\pi'$ with a larger max-prefix-sum of net marginal utilities attains a no-worse peak risk than $\pi$.
\end{proposition}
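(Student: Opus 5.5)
The argument is a telescoping identity followed by taking a maximum. I will use only the one-step identity $R_n(S_i^\pi)-R_n(S_{i-1}^\pi)=c_i^\pi-g_i^\pi=-a_i^\pi$, which follows directly from the decomposition \eqref{eq:risk-decomp}. It holds for every step $i=1,\dots,K$ of any ordering $\pi$, with no monotonicity or single-crossing assumption, so the proposition does not depend on the hypotheses of Theorem~\ref{thm:peak}.

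\textbf{Step 1.} Fix $k\in\{1,\dots,K\}$ and sum the one-step identity over $i=1,\dots,k$. The left-hand side telescopes, since intermediate risks cancel, giving
\[
R_n(S_k^\pi)=R_n(S_0^\pi)-\sum_{i=1}^{k}a_i^\pi .
\]
Here $S_0^\pi=\varnothing$ for every ordering, so $R_n(S_0^\pi)$ is the same constant for all $\pi$. I will note this explicitly, because the comparison in the final claim needs it.

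\textbf{Step 2.} Take the minimum over $k$ on both sides. The map $t\mapsto R_n(S_0^\pi)-t$ is strictly decreasing, so minimising $R_n(S_k^\pi)$ over $k$ is the same as maximising the prefix sum $\sum_{i\le k}a_i^\pi$ over the same finite index set. This gives the displayed identity. Because both sides are maxima or minima over the same finite set, attainment is automatic. It also follows that the path peak $P_n(\pi)$ of \eqref{eq:peak-defs} equals $1-R_n(S_0^\pi)+\max_k\sum_{i\le k}a_i^\pi$.

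\textbf{Step 3.} Suppose $\max_k\sum_{i\le k}a_i^{\pi'}\ge\max_k\sum_{i\le k}a_i^{\pi}$. Since the baseline $R_n(S_0)$ is common to both orderings, subtracting gives $\min_k R_n(S_k^{\pi'})\le\min_k R_n(S_k^{\pi})$. This is the "no-worse peak risk" conclusion, and equivalently $P_n(\pi')\ge P_n(\pi)$.

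The proof has no real obstacle. The only care needed is bookkeeping: the risks $R_n(\cdot)$ must be evaluated consistently, with the same $n$ and the same classifier protocol, across both orderings. Also, the quantities $g_i^\pi, c_i^\pi$ are path-dependent, because they are defined relative to $S_{i-1}^\pi$. They are therefore not intrinsic to the encoder $m_{\pi_i}$, which is exactly why the prefix-sum comparison is order-specific.
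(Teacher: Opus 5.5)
Your proposal is correct and follows the paper's own proof: telescope the one-step identity $R_n(S_i^\pi)-R_n(S_{i-1}^\pi)=-a_i^\pi$ to get $R_n(S_k^\pi)=R_n(S_0^\pi)-\sum_{i\le k}a_i^\pi$, then minimise over $k$. You also state explicitly that $R_n(S_0^\pi)=R_n(\varnothing)$ is the same for every ordering; the paper leaves this implicit, but it is exactly what justifies comparing $\pi'$ with $\pi$ in the final claim.
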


\emph{Proof.}
Sum the one-step identity to obtain
\[R_n(S_k^\pi)=R_n(S_0^\pi)
-\sum_{i=1}^{k} a_i^\pi;\]
the minimum risk along $\pi$ is at the index maximising the cumulative net utility.
\hfill$\square$

The information form
\[g_{k,\log}=I(F_{m_k}; Y \mid \bar F_{S_{k-1}})\]
interprets each conditional gain as task signal not yet covered by the current set; the one-step identity itself applies to any decomposable risk, including $0$--$1$ loss with a different $g_k$. Under the single-crossing conditions above, if $c_k(n) = \alpha_k / n$ with $\alpha_k \ge 0$, increasing $n$ weakly decreases every $c_k$, so the peak index
\[\max\{k : g_k > c_k(n)\}\]
is non-decreasing in $n$; the next subsections show the peaked regime is the rule and that the peak shifts right with $n$ on most datasets.

\subsection{A favorable diagnostic path}
\label{sec:diag-setup}

Two taxonomies recur throughout the paper. \textbf{Encoder paradigm} partitions pre-training objectives into five classes: (1) language-supervised (CLIP, SigLIP), (2) self-supervised (DINOv2, MAE, Data2Vec), (3) generative or image-prior (Stable-Diffusion VAE encoder), (4) class-supervised (ConvNeXt, ResNet-50, ViT-B/16), and (5) task-expert (SAM, Depth-Anything). \textbf{Recognition regime} partitions downstream classification tasks into five classes: generic object, fine-grained object, texture, geo-spatial scene, and structured symbol or digit. Supplementary App.~\ref{supp-app-y:config} gives the full taxonomy and per-paradigm encoder rationales.

The diagnostic pool contains six frozen encoders. Cambrian-1~\citep{tong2024cambrian} reports that the trio CLIP-ViT-B/16~\citep{radford2021learning}, DINOv2-base~\citep{oquab2023dinov2}, and ConvNeXt-base~\citep{liu2022convnet} delivers a $+3$ to $+5$ point improvement over single-encoder baselines on its vision-centric benchmark CV-Bench, and treats this trio as the main fusion combination of the paper. We adopt it as the first three encoders of our pool. We then add a second representative of each of the three paradigms it spans: SigLIP-B/16~\citep{zhai2023sigmoid} for language supervision, ViT-MAE-base~\citep{he2022masked} for self-supervision, and ResNet-50~\citep{he2016deep} for class supervision. The pool is balanced (two encoders per paradigm), and every individual encoder is competent at image classification. Paradigms (3) generative and (5) task-expert do not appear here. They are reintroduced in the main experiments of Section~\ref{sec:experiments}, where they serve as adversarial candidates that a competent selector should reject.

We evaluate this pool on five datasets, one per recognition regime: CIFAR-100 (generic object), Oxford Pets (fine-grained object), DTD (texture), Country211 (geo-spatial), and GTSRB (symbol or digit). The same five datasets reappear unchanged in the main experiments. The fusion operator used to produce $F_S$ is the gated-fusion operator of~\citet{shi2024eagle} with a 2-layer MLP, so peak-then-decline cannot reduce to a feature-concatenation artifact.

Encoders enter in a fixed \emph{literature-tracked path}: CLIP, DINOv2, ConvNeXt, SigLIP, MAE, ResNet-50, the Cambrian-1 trio followed by within-paradigm second representatives, so positions $5$--$6$ act as redundancy probes against $1$--$3$. The path is fixed a priori from published fusion practice, using no held-out task, transferability score, or observation of the evaluation datasets. Its purpose is not to approximate the optimal ordering but to show that even a plausible fusion trajectory has an order-specific peak below the full pool; Section~\ref{sec:method} then asks whether a better, more task-aligned trajectory can be built without downstream retraining.

Classical multi-view learning attributes fusion gain to \emph{complementarity}~\citep{xu2013multi}. Our pool spans three paradigms with two competent encoders each, creating conditions intended to favour complementary gains; peak-then-decline therefore appears even on a path designed to favour the ``more is better'' hypothesis. Figure~\ref{fig:overview} previews the central observation (per-dataset curves in supplementary App.~\ref{supp-app-y:scaling-table}).

\subsection{Evidence across regimes and data scales}
\label{sec:why-decline}

The marginal-utility account predicts (i) peak-then-decline whenever marginal gain $g_k$ diminishes faster than cost $c_k$ accumulates and, under the scaling condition stated at the end of Section~\ref{sec:peak-theory}, (ii) a right-shift of the crossing as $n$ grows. We test (i) with the 10-shot scan in Table~\ref{tab:fulldata} and (ii) with the $n$-sweep in Table~\ref{tab:nscaling}, running the same five datasets at 10-shot, 100-shot, and full training set.

\begin{table}[t]
\centering
\caption{\textbf{Peak-then-decline at 10-shot across all five recognition regimes.} Encoders are added one at a time along the literature-tracked path of Section~\ref{sec:diag-setup}. Each cell is gated-fusion accuracy (\%) after the indicated set is in place. \textbf{Bold}: per-column peak.}
\label{tab:fulldata}
\originalpaperlabel{main-tab:fulldata}
\small
\setlength{\tabcolsep}{5pt}
\begin{tabular}{l ccccc}
\toprule
Regime           & generic    & fine-grained & texture & geo-spatial & symbol \\
Dataset          & CIFAR-100  & Pets         & DTD     & Country211  & GTSRB  \\
\midrule
1 (CLIP)         & 60.29 & 85.04 & 64.41 & $\mathbf{13.50}$ & $\mathbf{61.15}$ \\
2 (+DINOv2)      & 80.11 & 92.86 & 72.67 & 9.85             & 51.54 \\
3 (+ConvNeXt)    & $\mathbf{80.37}$ & $\mathbf{93.75}$ & 73.52 & 9.91 & 52.73 \\
4 (+SigLIP)      & 80.18 & 93.46 & 74.05 & 9.82             & 52.34 \\
5 (+MAE)         & 80.20 & 93.65 & 74.17 & 9.71             & 51.97 \\
6 (+ResNet-50)   & 79.92 & 93.49 & $\mathbf{74.18}$ & 9.58  & 52.58 \\
\midrule
Peak position    & 3     & 3     & 6     & 1                & 1     \\
\bottomrule
\end{tabular}
\end{table}

\begin{table}[t]
\centering
\caption{\textbf{$n$-scaling across all five recognition regimes: peak position along the literature-tracked path at three sample sizes.} For each dataset and each of three sample sizes (10 shots per class, 100 shots per class, full training set), we report the peak position along the path and the gated-fusion accuracy (\%) at that peak. The right-most column gives the shift $\Delta = $ (peak at full data) $-$ (peak at 10-shot). The peak moves right with $n$ on $3$ of $5$ datasets (CIFAR-100, Pets, GTSRB); Country211 is single-encoder saturated and stays at $k = 1$; DTD shows a non-monotonic dependence on $n$ (further discussed in the text). Full per-$k$ curves are in supplementary App.~\ref{supp-app-y:scaling-table}.}
\label{tab:nscaling}
\small
\setlength{\tabcolsep}{5pt}
\begin{tabular}{l cc cc cc c}
\toprule
 & \multicolumn{2}{c}{$n = 10$-shot} & \multicolumn{2}{c}{$n = 100$-shot} & \multicolumn{2}{c}{$n = $ full data} & \\
\cmidrule(lr){2-3} \cmidrule(lr){4-5} \cmidrule(lr){6-7}
Dataset (regime)             & peak\,$k$ & Acc & peak\,$k$ & Acc & peak\,$k$ & Acc & $\Delta$ \\
\midrule
CIFAR-100 (generic)          & 3 & 80.37 & 3 & 84.26 & 5 & 87.36 & +2 \\
Pets (fine-grained)          & 3 & 93.75 & 4 & 95.86 & 5 & 95.87 & +2 \\
DTD (texture)                & 6 & 74.18 & 3 & 80.78 & 5 & 82.53 & -1 \\
Country211 (geo-spatial)     & 1 & 13.50 & 1 & 24.01 & 1 & 25.65 & 0  \\
GTSRB (symbol / digit)       & 1 & 61.15 & 1 & 82.35 & 5 & 86.55 & +4 \\
\bottomrule
\end{tabular}
\end{table}

\textbf{Order-specific peaks on a favourable fixed path.} Under 10-shot training (Table~\ref{tab:fulldata}) the peak position is task-dependent: $k = 3$ on CIFAR-100 and Pets, $k = 6$ on DTD, and $k = 1$ on Country211 and GTSRB, where CLIP alone already beats every multi-encoder prefix and each added view has negative net utility along this path. The peak is a property of the ordered marginal-utility sequence, not the model count.

\textbf{Peak shifts right where finite-sample cost dominates.} The $n$-sweep (Table~\ref{tab:nscaling}) shows the peak moving right with $n$ on three of five datasets: GTSRB jumps from $k = 1$ to $k = 5$ at full data, CIFAR-100 and Pets shift by $+2$, Country211 stays at $k = 1$, and DTD is non-monotone. The lesson is structural: the prefix-curve height at \emph{every} $k$ depends on the ordering, so a selector that front-loads high-utility views raises the curve everywhere, which KAGES (Section~\ref{sec:method}) implements.

\section{KAGES: Greedy View-Set Composition by Kernel Alignment}
\label{sec:method}

Section~\ref{sec:phenomenon} showed that large-pool fusion is an ordering problem: a selector's quality is reflected in the entire prefix-accuracy curve, not in a single best prefix. KAGES constructs such an ordering by greedily maximising a set-level kernel-alignment utility. It is training-free but not label-free: it uses labels only through a label kernel and never trains a downstream classifier for a candidate subset, which makes it deployable at pool sizes where validation-based subset search is infeasible.

\subsection{Set-level objective and greedy selection}
\label{sec:obj}

Let $\Ms = \{m_1, \ldots, m_M\}$ be the candidate pool, $F_s \in \mathbb{R}^{n \times d_s}$ the feature matrix of encoder $s$ on $n$ labelled samples, and $Y \in \{0,1\}^{n \times C_{\mathrm{cls}}}$ the one-hot label matrix. For a subset $S \subseteq \Ms$, KAGES assigns the set-level score
\begin{equation}
\label{eq:obj}
J(S) \;=\; U(\bar F_S, Y) \;-\; \tau \sum_{s \in S} \frac{d_s}{n},
\end{equation}
where $U(\bar F_S, Y) \in \mathbb{R}$ measures the task alignment of the joint selected representation $\bar F_S$ and the optional second term is a per-encoder finite-sample complexity penalty ($\tau \ge 0$; the default $\tau = 0$ recovers the pure utility and is swept in Section~\ref{sec:experiments}). We set $U(\emptyset,Y):=J(\emptyset):=0$ by convention. Given a current set $S$, KAGES scores a candidate $m \notin S$ by the marginal gain $\Delta J(m \mid S) := J(S \cup \{m\}) - J(S)$, adds the candidate with the largest marginal gain, and repeats until all candidates are ordered. The output is a \emph{complete} encoder ordering; downstream users may select any prefix according to their compute budget. When a single score-based deployment size is desired, an optional rule is $\hat k_J := \min\!\left(\operatorname*{arg\,max}_{1\le k\le M} J(S_k)\right)$, which chooses the smallest prefix attaining the largest score along the completed path.

\begin{algorithm}[h]
\caption{KAGES: training-free greedy encoder ordering by set-level marginal utility.}
\label{alg:kages}
\originalpaperlabel{main-alg:kages}
\begin{algorithmic}[1]
\REQUIRE Pool $\Ms = \{m_1, \ldots, m_M\}$; labelled task $(X, Y)$ with $n$ samples; cost coefficient $\tau \ge 0$.
\STATE Compute centred encoder kernels $K_s^c$ for every $s \in \Ms$ on the $n$ samples.
\STATE Initialize $S_0 \leftarrow \emptyset$, $\hpi \leftarrow (\,)$.
\FOR{$t = 1, 2, \ldots, M$}
  \STATE $m_t \leftarrow \arg\max_{m \notin S_{t-1}}\; \Delta J(m \mid S_{t-1})$ \hfill {\itshape\small\# marginal-gain greedy step}
  \STATE $S_t \leftarrow S_{t-1} \cup \{m_t\}$; append $m_t$ to $\hpi$.
\ENDFOR
\STATE Optional: $\hat k_J \leftarrow \min\!\left(\operatorname*{arg\,max}_{1\le k\le M} J(S_k)\right)$.
\RETURN encoder ordering $\hpi = (m_1, \ldots, m_M)$ and optional score-selected prefix $S_{\hat k_J}$.
\end{algorithmic}
\end{algorithm}

Algorithm~\ref{alg:kages} produces a complete ordering of the pool and, optionally, a score-selected prefix. The algorithm itself does not require monotonicity or submodularity; Corollary~\ref{cor:surrogate-peak} applies only to the unpenalised alignment utility and only under its stated conditions.

\subsection{Centred kernel-target alignment as utility}
\label{sec:ker-evid}

For non-empty $S$, we instantiate $U$ with the centred set-level kernel-target alignment,
\begin{equation}
\label{eq:U-ker}
U(\bar F_S, Y) \;=\; \mathrm{CKA}\!\bigl(K_S^c,\, K_Y^c\bigr),
\qquad
K_S^c \;=\; \sum_{s \in S} K_s^c,
\quad
K_s^c \;=\; H\, F_s F_s^{\top}\, H,
\quad
K_Y^c \;=\; H\, Y Y^{\top}\, H,
\end{equation}
where $H = I_n - \tfrac{1}{n}\mathbf{1}\mathbf{1}^{\top}$ is the centering matrix and $K_s^c$ is the centred linear kernel of encoder $s$. By the convention above, $U(\emptyset,Y)=0$; we assume $\|K_Y^c\|_F>0$ and discard any encoder with $\|K_s^c\|_F=0$. For non-zero kernels, centred kernel-target alignment~\citep{cristianini2002kernel,kornblith2019similarity} is bounded in $[0, 1]$, scale-invariant in each kernel individually, and reduces to canonical kernel-target alignment for $|S| = 1$.

\emph{Linear-kernel additivity.} The additive form $K_S^c = \sum_{s \in S} K_s^c$ is not cosmetic: dropping the centering shows it equals the linear kernel of the concatenated feature matrix,
\begin{equation}
\label{eq:additivity}
[F_{s_1} \mid \cdots \mid F_{s_k}]\,[F_{s_1} \mid \cdots \mid F_{s_k}]^{\top}
\;=\; \sum_{j=1}^{k} F_{s_j} F_{s_j}^{\top},
\end{equation}
so adding an encoder is an $n \times n$ additive update on the running set kernel, independent of $d_m$. A per-encoder Frobenius-normalised variant $\widetilde{K}_s^c := K_s^c / (\|K_s^c\|_F + \varepsilon)$ yields slightly lower empirical AULC on our pool; for non-zero kernels the $\varepsilon=0$ form is exactly scale-invariant, while the small numerical $\varepsilon>0$ only stabilises near-zero kernels. We report this variant as an ablation in supplementary App.~\ref{supp-app-y:normalization-ablation} and use the raw additive variant in the main results.

\emph{Running scalars and cost.} We maintain $K_S^c$ and three Frobenius scalars ($A_S = \langle K_S^c, K_Y^c \rangle_F$, $B_S = \langle K_S^c, K_S^c \rangle_F$, $C = \|K_Y^c\|_F^2$); each candidate evaluation costs three $n \times n$ Frobenius inner products and one $n \times n$ tensor sum --- $O(n^2)$ per candidate per step, independent of encoder dimension. For 10 encoders at $n = 500$, one full greedy traversal completes in under one second on a single GPU. Writing $p_c=n_c/n$ for the empirical class proportions, the centred label kernel satisfies $(K_Y^c)_{ij}=\mathbf{1}[y_i=y_j]-p_{y_i}-p_{y_j}+\sum_c p_c^2$, reducing to $\mathbf{1}[y_i=y_j]-1/C_{\mathrm{cls}}$ for class-balanced samples. Thus $\langle K_m^c, K_Y^c \rangle_F$ rewards encoder similarity structure that covaries with the task labels.

\subsection{Algebraic structure and greedy guarantee}
\label{sec:prop}

The set-level kernel-target alignment admits an exact algebraic expansion that exposes the relevance--redundancy trade-off without any linear combination of separately estimated proxies. Unless stated otherwise, this subsection analyses the pure alignment utility $U$, equivalently $J$ with $\tau=0$, which is used in the main results. For $\tau>0$, $\Delta J(m\mid S)=\Delta U(m\mid S)-\tau d_m/n$.

\begin{proposition}[Algebraic expansion of $\mathrm{CKA}(K_S^c, K_Y^c)$]
\label{prop:expansion}
For any non-empty $S \subseteq \Ms$ with $\|K_S^c\|_F > 0$ and $\|K_Y^c\|_F > 0$,
\begin{equation}
\label{eq:expansion}
\mathrm{CKA}\!\bigl(K_S^c,\, K_Y^c\bigr) \;=\;
\frac{\displaystyle \sum_{s \in S} \langle K_s^c, K_Y^c \rangle_F}%
     {\sqrt{\displaystyle \sum_{s, s' \in S} \langle K_s^c, K_{s'}^c \rangle_F}\;\cdot\;\|K_Y^c\|_F}.
\end{equation}
The numerator is the sum of per-encoder centred kernel-target inner products. The denominator decomposes into per-encoder self-terms $\langle K_s^c, K_s^c \rangle_F$ (encoder ``kernel mass'') and pairwise cross-terms $\langle K_s^c, K_{s'}^c \rangle_F$ for $s \neq s'$ (between-encoder kernel similarity).
\end{proposition}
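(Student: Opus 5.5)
The plan is to unfold the definition of centred kernel-target alignment and substitute the additive set kernel from \eqref{eq:U-ker}. For non-zero symmetric matrices $K, L$, the CKA used here is $\mathrm{CKA}(K,L)=\langle K, L\rangle_F / (\|K\|_F\,\|L\|_F)$. Both arguments in \eqref{eq:U-ker} are already centred ($K_S^c$ is a sum of $H(\cdot)H$ terms and $K_Y^c = HYY^\top H$), so no further centering is applied inside the CKA. I will note this explicitly, using $H^2=H$: re-centring $K_S^c$ by $H(\cdot)H$ leaves it unchanged, so the HSIC-style and cosine-style definitions coincide. The hypotheses $\|K_S^c\|_F>0$ and $\|K_Y^c\|_F>0$ guarantee that the ratio is well defined.

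Next I expand the numerator using bilinearity of the Frobenius inner product:
\[
\langle K_S^c, K_Y^c\rangle_F=\Bigl\langle \sum_{s\in S}K_s^c, K_Y^c\Bigr\rangle_F=\sum_{s\in S}\langle K_s^c,K_Y^c\rangle_F .
\]
For the denominator, I write $\|K_S^c\|_F=\sqrt{\langle K_S^c,K_S^c\rangle_F}$ and expand bilinearly in both slots to get the double sum $\sum_{s,s'\in S}\langle K_s^c,K_{s'}^c\rangle_F$. Splitting this sum into its diagonal terms ($s=s'$) and off-diagonal terms ($s\neq s'$) gives the stated decomposition into self-terms and cross-terms. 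Symmetry of the inner product means each unordered pair appears twice, and I will remark on this.

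Substituting both expansions into the definition yields \eqref{eq:expansion}. There is no genuine obstacle. The only point needing care is confirming that the CKA normalisation convention matches the cosine form, including that no extra centering or scaling factor such as $(n-1)^{-2}$ survives. Any such factor would cancel between numerator and denominator anyway, since CKA is scale-invariant. I will also note, as a side remark, that the quantity under the square root is nonnegative because it equals $\|K_S^c\|_F^2$, even though individual cross-terms could in principle be negative. In fact, each term $\langle HAH, HBH\rangle_F=\mathrm{tr}(HAHB)$ with $A, B$ PSD is nonnegative, but the proof does not need this.
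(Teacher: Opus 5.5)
Your proposal is correct and follows the paper's own argument. Substitute $K_S^c=\sum_{s\in S}K_s^c$ into the cosine form of centred alignment, then expand $\langle K_S^c,K_Y^c\rangle_F$ and $\langle K_S^c,K_S^c\rangle_F$ by bilinearity of the Frobenius inner product; your side remarks ($HK_S^cH=K_S^c$, cancellation of normalising constants, and non-negativity of the cross-terms $\mathrm{tr}(HAHB)$ for positive semidefinite $A,B$) are all correct but not needed for the identity itself.
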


\begin{proof}[Proof sketch]
By construction $K_S^c = \sum_{s \in S} K_s^c$ (Eq.~\ref{eq:U-ker}). Bilinearity of the Frobenius inner product yields $\langle K_S^c, K_Y^c \rangle_F = \sum_{s \in S} \langle K_s^c, K_Y^c \rangle_F$ and $\langle K_S^c, K_S^c \rangle_F = \sum_{s, s' \in S} \langle K_s^c, K_{s'}^c \rangle_F$. Substituting into the definition of centred kernel-target alignment recovers Eq.~\eqref{eq:expansion}.
\end{proof}

\paragraph{Exact marginal-gain criterion.}
For brevity write the five scalars
\(
A_S := \langle K_S^c, K_Y^c \rangle_F,\;
B_S := \langle K_S^c, K_S^c \rangle_F,\;
r_m := \langle K_m^c, K_Y^c \rangle_F,\;
h_m := \langle K_m^c, K_S^c \rangle_F,\;
q_m := \langle K_m^c, K_m^c \rangle_F,
\)
with $\|K_Y^c\|_F^2 = C$ fixed across candidates.

\begin{proposition}[Exact CKA marginal-gain criterion]
\label{prop:exact-margin}
For the unpenalised objective used in the main results ($\tau=0$), let $S$ be non-empty with $B_S>0$ and $C>0$. Adding $m \notin S$ to $S$ yields positive alignment gain $\Delta U(m \mid S) > 0$ if and only if
\begin{equation}
\label{eq:exact-margin}
r_m \;>\; A_S \cdot \left( \sqrt{1 + \frac{2 h_m + q_m}{B_S}} \;-\; 1 \right).
\end{equation}
\end{proposition}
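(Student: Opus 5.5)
The plan is to write both alignments explicitly via Proposition~\ref{prop:expansion} and then reduce the strict inequality $\Delta U(m\mid S)>0$ to a comparison of two positive quantities. Since $K_{S\cup\{m\}}^c = K_S^c + K_m^c$, bilinearity gives
\[
A_{S\cup\{m\}} = A_S + r_m,\qquad B_{S\cup\{m\}} = B_S + 2h_m + q_m .
\]
So, with $\sqrt{C}>0$ common to both terms,
\[
\Delta U(m\mid S)=\frac{1}{\sqrt{C}}\left(\frac{A_S+r_m}{\sqrt{B_S+2h_m+q_m}}-\frac{A_S}{\sqrt{B_S}}\right).
\]
Before dividing by $\sqrt{B_{S\cup\{m\}}}$, I check that $B_{S\cup\{m\}} = \|K_S^c+K_m^c\|_F^2>0$. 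Centred linear kernels are positive semidefinite, so $\langle K_S^c,K_m^c\rangle_F\ge 0$. Hence $B_{S\cup\{m\}}\ge B_S>0$, and $h_m\ge 0$ as well. The same PSD fact gives $A_S\ge 0$ and $r_m\ge 0$, because $K_Y^c$ is also PSD.

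Next I multiply the strict inequality $\Delta U>0$ by the positive quantity $\sqrt{C}\sqrt{B_S+2h_m+q_m}$. This shows it is equivalent to
\[
A_S + r_m > A_S\sqrt{\frac{B_S+2h_m+q_m}{B_S}} = A_S\sqrt{1+\frac{2h_m+q_m}{B_S}} .
\]
Subtracting $A_S$ from both sides yields exactly Eq.~\eqref{eq:exact-margin}. Every step is an equivalence, since I only multiply by strictly positive scalars and add constants, so the statement holds as an ``if and only if''. The right-hand side is well defined because $1+(2h_m+q_m)/B_S\ge 1$.

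The one point needing care is the sign bookkeeping. The argument has to be a pure chain of equivalences, with no squaring of sides that could be negative. Squaring is avoided entirely because the square root is isolated on a single side with a positive coefficient. The remaining subtlety is confirming $B_{S\cup\{m\}}>0$, which the PSD observation above supplies. This also covers the edge case $A_S=0$, where the criterion reduces to $r_m>0$.
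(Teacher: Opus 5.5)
Your proof is correct. It differs from the paper's only in how you resolve the comparison $\tfrac{A_S+r_m}{\sqrt{B_S+2h_m+q_m}} > \tfrac{A_S}{\sqrt{B_S}}$.

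\textbf{The paper's route.} It squares both sides, which is legitimate because $A_S, r_m \ge 0$ by positive semidefiniteness. This gives the radical-free inequality $r_m^2 B_S + 2A_S r_m B_S > A_S^2(2h_m+q_m)$. It then reads off the threshold as the positive root of this quadratic in $r_m$, which again relies on $A_S+r_m\ge 0$ to discard the lower root.

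\textbf{Your route.} You multiply by the positive scalar $\sqrt{C}\sqrt{B_S+2h_m+q_m}$ and isolate the radical, so every step is trivially reversible. The only sign information you actually use is $B_S>0$ and $B_{S\cup\{m\}}>0$. Your remarks that $A_S\ge 0$ and $r_m\ge 0$ are true but never used, so your chain proves the equivalence for arbitrary signs of $A_S$ and $r_m$. That is a slightly more general and cleaner argument.

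\textbf{What each buys.} The paper's squaring step yields a polynomial form that sets the relevance terms directly against the redundancy and mass term $2h_m+q_m$. Yours delivers the threshold form of Eq.~\eqref{eq:exact-margin} immediately, with no extraneous roots to rule out. It also makes the first-order expansion $\sqrt{1+x}-1\approx x/2$ behind the linearised relevance-minus-redundancy criterion transparent.
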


\emph{Proof.} Substitute $K_{S \cup \{m\}}^c = K_S^c + K_m^c$ into $U(S \cup \{m\}) = (A_S + r_m) / \sqrt{(B_S + 2 h_m + q_m)\,C}$ and require $U(S \cup \{m\}) > U(S) = A_S / \sqrt{B_S\,C}$. With $A_S \ge 0$ both sides are non-negative; squaring and rearranging yields Eq.~\eqref{eq:exact-margin}. Full derivation in supplementary App.~\ref{supp-app-y:exact-margin}. \hfill$\square$

\paragraph{Adaptive kernel mRMR interpretation.}
For $A_S>0$, when the candidate's target and set-mass contributions are both small relative to the current set, $r_m/A_S \ll 1$ and $(2 h_m + q_m) / B_S \ll 1$, first-order expansion of Eq.~\eqref{eq:exact-margin}, with their product treated as second order, gives the linearised criterion
\begin{equation}
\label{eq:mrmr-linear}
\Delta U(m \mid S) \;\approx\; \frac{1}{\sqrt{B_S\,C}}\,\Bigl[\, r_m \;-\; \tfrac{A_S}{2 B_S}\,\bigl(\,2 h_m + q_m\,\bigr) \Bigr].
\end{equation}
KAGES therefore locally implements a parameter-free relevance-minus-redundancy criterion whose redundancy weight $A_S / (2 B_S)$ is determined by the current set's alignment and mass, rather than being fixed ahead of time as in classical mRMR-style selectors~\citep{peng2005feature,brown2012conditional}.

\paragraph{Conditional greedy guarantee.} The unnormalised numerator $A_S=\langle K_S^c,K_Y^c\rangle_F$ is monotone because the centred Gram matrices are positive semidefinite, but the normalised CKA utility $U(S)$ need not be monotone: its denominator can outweigh a non-negative numerator increment, as characterised exactly by Proposition~\ref{prop:exact-margin}. We therefore state a conditional guarantee under monotonicity and the standard submodularity-ratio condition of \citet{das2011submodular}; the theorem does not assert that every empirical CKA instance satisfies these assumptions.

\begin{theorem}[Conditional greedy guarantee under a submodularity ratio]
\label{thm:y:weak-submod}
Let $U:2^{\Ms}\to\mathbb{R}_{\ge0}$ satisfy $U(\emptyset)=0$ and be monotone non-decreasing. Fix a cardinality budget $K$. Suppose there is a constant $\gamma\in(0,1]$ such that, for every $L\subseteq\Ms$ and every $T\subseteq\Ms\setminus L$ with $|T|\le K$,
\[
\sum_{m\in T}\Delta U(m\mid L)
\;\ge\;
\gamma\bigl[U(L\cup T)-U(L)\bigr].
\]
Then the cardinality-$K$ prefix $S_K^{\mathrm{greedy}}$ produced by Algorithm~\ref{alg:kages} with $\tau=0$ satisfies
\[
U(S_K^{\mathrm{greedy}})
\;\ge\;
\bigl(1-e^{-\gamma}\bigr)\max_{|S|\le K}U(S).
\]
\end{theorem}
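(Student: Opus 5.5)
We follow the standard Das--Kempe argument: first a per-step inequality showing that each greedy step closes a fixed fraction of the remaining gap to the optimum, then an unrolled recursion. Let $S^\star\in\arg\max_{|S|\le K}U(S)$ and $\mathrm{OPT}:=U(S^\star)$. Write $S_t:=S_t^{\mathrm{greedy}}$ for the greedy prefixes of Algorithm~\ref{alg:kages} with $\tau=0$, so $S_0=\varnothing$ and $U(S_0)=0$. Since $\tau=0$, each greedy step chooses $m_{t+1}\in\arg\max_{m\notin S_t}\Delta U(m\mid S_t)$.

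\textbf{Per-step inequality.} Fix $t<K$. We apply the submodularity-ratio hypothesis with $L=S_t$ and $T=S^\star\setminus S_t$. This is admissible because $T\cap L=\varnothing$ and $|T|\le|S^\star|\le K$. Monotonicity gives
\[
U(S^\star)\le U(S_t\cup S^\star)=U(S_t\cup T).
\]
Combining this with the hypothesis yields
\[
\gamma\bigl[\mathrm{OPT}-U(S_t)\bigr]\le\sum_{m\in T}\Delta U(m\mid S_t)\le |T|\max_{m\notin S_t}\Delta U(m\mid S_t)\le K\bigl[U(S_{t+1})-U(S_t)\bigr].
\]
Two edge cases need care. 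If $T=\varnothing$, then $S^\star\subseteq S_t$, so $U(S_t)\ge\mathrm{OPT}$ by monotonicity, and monotonicity keeps all later prefixes at least $\mathrm{OPT}$. The middle inequality uses $|T|\le K$ together with the fact that the greedy maximum is $\ge0$ by monotonicity, so replacing $|T|$ by $K$ only loosens the bound. Greedy never exhausts the pool before step $K$ as long as $K\le M$. If $K>M$, we take the prefix of size $M=|\Ms|$, which equals $\Ms$ and is optimal by monotonicity.

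\textbf{Recursion.} Set $\delta_t:=\mathrm{OPT}-U(S_t)$. The per-step inequality rearranges to
\[
\delta_{t+1}\le\Bigl(1-\tfrac{\gamma}{K}\Bigr)\delta_t .
\]
This holds whenever $\delta_t\ge0$. Once some $\delta_t<0$, monotonicity keeps all subsequent $\delta_{t'}<0$, and the claim is immediate. Unrolling from $\delta_0=\mathrm{OPT}$ and using $1-x\le e^{-x}$ gives
\[
\delta_K\le\bigl(1-\gamma/K\bigr)^K\mathrm{OPT}\le e^{-\gamma}\,\mathrm{OPT},
\]
which is exactly $U(S_K^{\mathrm{greedy}})\ge(1-e^{-\gamma})\mathrm{OPT}$.

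\textbf{Main obstacle.} The delicate point is matching the hypothesis to the sets actually used. The ratio must be applied at the greedy prefixes $L=S_t$ with sets $T$ of size at most $K$, which the stated quantifier over all $L$ covers. We must also ensure that ties and the argmax over $m\notin S_t$ refer to the same $\Delta U$ appearing in the hypothesis, which is why the restriction $\tau=0$ matters. Finally, the hypothesis $U\ge0$, which holds for the CKA utility by $[0,1]$-boundedness, ensures $\mathrm{OPT}\ge0$, so the multiplicative bound is meaningful.
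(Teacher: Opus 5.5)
Your proposal is correct and follows essentially the same route as the paper's proof: apply the submodularity-ratio condition at $L=S_t$ with $T=S^\star\setminus S_t$, use monotonicity to bound the summed gains below by $\gamma[\mathrm{OPT}-U(S_t)]$, bound that sum by $K$ times the greedy gain, and unroll $\delta_{t+1}\le(1-\gamma/K)\delta_t$ with $1-x\le e^{-x}$. Your extra case handling (empty $T$, negative gaps, $K>M$) is harmless bookkeeping that the paper leaves implicit; in fact the per-step recursion holds whatever the sign of $\delta_t$.
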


\emph{Proof sketch.} Let $O$ be an optimal set with $|O|\le K$. The submodularity-ratio condition implies that the sum of the available gains from $O\setminus S_k$ is at least $\gamma[U(O)-U(S_k)]$. Hence the greedy gain is at least $\gamma/K$ times the current optimality gap, which contracts geometrically and yields the stated $(1-e^{-\gamma})$ factor. Full proof in supplementary App.~\ref{supp-app-y:proof-greedy}.

\begin{corollary}[Prefix-wise fixed-budget guarantee]
\label{cor:surrogate-peak}
Under the hypotheses of Theorem~\ref{thm:y:weak-submod}, for every $k=1,\ldots,K$, the greedy ordering $\pi_g$ produced by Algorithm~\ref{alg:kages} with $\tau=0$ satisfies
\[
U(S_k^{\pi_g})
\;\ge\;
\bigl(1-e^{-\gamma}\bigr)\max_{|S|\le k}U(S).
\]
\end{corollary}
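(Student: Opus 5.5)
The plan is to apply Theorem~\ref{thm:y:weak-submod} separately for each prefix length $k\le K$. The greedy ordering is built without any reference to a budget. At step $t$, Algorithm~\ref{alg:kages} with $\tau=0$ adds $\arg\max_{m\notin S_{t-1}}\Delta U(m\mid S_{t-1})$, and this choice depends only on $S_{t-1}$ and not on $K$. So the first $k$ elements of $\pi_g$ are exactly the set that greedy returns when it is run with budget $k$. In symbols, $S_k^{\pi_g}=S_k^{\mathrm{greedy}}$ for the budget-$k$ run, and ties are broken by the same deterministic rule in both runs.

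Next I check that the hypotheses of Theorem~\ref{thm:y:weak-submod} carry over from budget $K$ to budget $k$. The conditions $U\ge0$, $U(\emptyset)=0$ and monotonicity do not involve the budget. The submodularity-ratio inequality is assumed for every $L$ and every $T\subseteq\Ms\setminus L$ with $|T|\le K$. Since $k\le K$, it holds in particular for all $T$ with $|T|\le k$, with the same $\gamma$. Applying the theorem with budget $k$ then gives $U(S_k^{\pi_g})\ge(1-e^{-\gamma})\max_{|S|\le k}U(S)$, which is the claim.

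If I wanted a self-contained argument instead, I would rerun the theorem's contraction argument directly. Let $O_k$ be optimal with $|O_k|\le k$. For each $t<k$, monotonicity gives $U(O_k)-U(S_t)\le U(S_t\cup O_k)-U(S_t)$. Applying the ratio condition with $L=S_t$ and $T=O_k\setminus S_t$ (so $|T|\le k\le K$) bounds this by $\gamma^{-1}\sum_{m\in T}\Delta U(m\mid S_t)\le \gamma^{-1}k\,\Delta U(m_{t+1}\mid S_t)$. This yields the recursion $U(O_k)-U(S_{t+1})\le(1-\gamma/k)\,[U(O_k)-U(S_t)]$. Iterating it $k$ times from $U(S_0)=0$, and using $(1-\gamma/k)^k\le e^{-\gamma}$, gives the bound.

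The only delicate point is the nesting identification: the greedy prefix of length $k$ must coincide with the budget-$k$ greedy output. This needs tie-breaking to be consistent across runs, and it needs the greedy rule to be the unpenalised one ($\tau=0$). For $\tau>0$ the marginal gains change, and the guarantee is not claimed in that case.
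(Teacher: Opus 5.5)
Your proposal is correct and follows the paper's own proof: apply the theorem with budget $k$ and observe that the length-$k$ prefix of the greedy ordering is unaffected by continuing the traversal. You also spell out something the paper leaves implicit, namely that the submodularity-ratio hypothesis for $|T|\le K$ implies it for $|T|\le k$. Your tie-breaking caveat is unnecessary, since the theorem's argument holds for any choice from the $\arg\max$, so every length-$k$ prefix of any greedy run is already a valid budget-$k$ greedy run.
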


Corollary~\ref{cor:surrogate-peak} is a fixed-budget statement at each prefix size, not a guarantee that empirical CKA is monotone or that its score-selected stopping point is accuracy-optimal. The accuracy-level prefix quality is evaluated empirically in Section~\ref{sec:experiments}.

\paragraph{Empirical ordering consistency.}
KAGES uses empirical marginal gains $\Delta\widehat J_n(m\mid S)$ rather than the marginal gains of a deterministic population target utility $J$. Under the uniform subgaussian concentration and bias assumptions stated in supplementary App.~\ref{supp-app-y:T1}, with scale $\sigma$, bias constant $B$, and population-path margin $G>0$, if $n\ge(4B/G)^2$ then the empirical ordering matches the population greedy ordering with probability at least $1-[M(M+1)-2]\exp(-nG^2/(32\sigma^2))$. This is an assumption-conditional concentration result, not an unconditional Hoeffding claim for empirical CKA.

\section{Experiments}
\label{sec:experiments}

We evaluate KAGES on the main image-classification protocol of Section~\ref{sec:diag-setup} extended to all five pre-training paradigms, and then test transfer to image retrieval and frozen-LLM fusion.

\subsection{Experimental setup}
\label{sec:setup-exp}

\paragraph{Datasets.} Following the taxonomies introduced in Section~\ref{sec:diag-setup}, we evaluate on the same five datasets used by the diagnostic, covering all five recognition regimes one-to-one: \emph{Generic Object} (CIFAR-100), \emph{Fine-grained Object} (Oxford Pets), \emph{Texture / Material} (DTD), \emph{Geo-spatial / Scene} (Country211), and \emph{Structured Symbol / Digit} (GTSRB).

\paragraph{Encoder pool.} The pool extends the diagnostic pool to all five pre-training paradigms, contributing 10 frozen backbones spanning: language-supervised (CLIP-ViT-B/16~\citep{radford2021learning}, SigLIP-B/16~\citep{zhai2023sigmoid}); self-supervised (DINOv2-base~\citep{oquab2023dinov2}, ViT-MAE-base~\citep{he2022masked}, Data2Vec~\citep{baevski2022data2vec}); generative (Stable-Diffusion VAE~\citep{rombach2022sdvae}); class-supervised (ConvNeXt-base~\citep{liu2022convnet}, ResNet-50~\citep{he2016deep}, ViT-B/16~\citep{dosovitskiy2021vit}); task-expert (SAM-ViT-B~\citep{kirillov2023sam}). Paradigms C and E reappear as \emph{adversarial cases} that a competent selector should reject.

\paragraph{Protocols.} We report three headline protocols. \textbf{B} uses 16-shot training with a Concat+MLP fusion head (hidden dim 512, 20 epochs). \textbf{H} uses an 8-shot setting with a larger $M=15$ candidate pool. \textbf{F} uses the full training set with the same fusion-head family. Together they test whether the ordering-quality advantage persists across low-shot, larger-pool, and full-data regimes. Additional protocol and training details are in supplementary App.~\ref{supp-app-y:config}.

\paragraph{Metrics.} View-set composition is an ordering problem, so we evaluate selectors by the quality of their entire prefix path. Our two primary ordering metrics are:
\begin{itemize}[nosep,leftmargin=*]
  \item \textbf{AULC} (normalised area under the $k$--accuracy curve, analogous to AUROC): the mean prefix accuracy along the method's full ordering.
  \item \textbf{Acc@$k^\star$}: the peak accuracy along the ordering, the best fused set the path reaches (marked by stars in Figure~\ref{fig:path-curves}).
\end{itemize}
AULC does not depend on a stopping rule, so it directly evaluates prefix-path quality. Acc@$k^\star$ is a hindsight path diagnostic computed from the downstream accuracy curve; it does not evaluate the optional score-based stopping rule $\hat k_J$, for which we make no separate empirical stopping-point claim.

\paragraph{Baselines and seeds.} We compare KAGES against multi-view selection strategies operating on the same frozen-encoder pool. The primary baselines are the standard multi-view defaults: \textbf{Fuse-All} embodies the more-is-better default, \textbf{Diversity} selects for representational difference through centred kernel alignment between encoders, and \textbf{Random} is a random-order baseline. Because AULC requires a complete path, Fuse-All is represented by the prespecified fixed-order prefix path used by the implementation, whose final prefix contains the full pool. We additionally compare against two stronger subset-selection baselines, \textbf{DPP} (determinantal point process MAP inference) and \textbf{Submodular} (a facility-location coverage objective). \textbf{Oracle} greedily adds the encoder producing the largest true downstream-accuracy improvement and is a validation-based greedy hindsight reference, not a global upper bound over all subsets. All AULC numbers are means over five random seeds $\{17, 23, 42, 71, 137\}$; standard deviations in supplementary App.~\ref{supp-app-y:tables-full}.

\subsection{Main results}
\label{sec:main-results}

\begin{table}[t]
\centering
\caption{\textbf{Per-dataset AULC (\%) on all three protocols: KAGES beats the multi-view defaults on every dataset, by a wide margin where heterogeneous fusion matters.} Against full fusion (Fuse-All), random and diversity selection, KAGES is best in every cell. The $\Delta$ rows give KAGES minus Fuse-All: the gain is largest on the structured-symbol task GTSRB ($+8.9$ to $+15.5$ points) and on the geo regime at full data. Stronger subset-selection baselines (DPP, submodular) are reported in the text; KAGES matches or exceeds them too. Means over 5 seeds $\{17,23,42,71,137\}$; \textbf{bold}: best per column within a protocol.}
\label{tab:cross-protocol}
\footnotesize
\setlength{\tabcolsep}{5.5pt}
\begin{tabular}{l ccccc c}
\toprule
Method & C100 & Pets & DTD & C211 & GTSRB & Avg \\
\midrule
\multicolumn{7}{l}{\emph{Protocol B (16-shot, Concat+MLP)}} \\
Fuse-All  & 78.54 & 94.06 & 75.35 & 10.16 & 57.18 & 63.06 \\
Random    & 77.06 & 92.97 & 74.87 & 11.10 & 59.16 & 63.03 \\
Diversity & 80.61 & 90.91 & 74.38 & 11.27 & 59.12 & 63.26 \\
\textbf{KAGES (ours)} & \textbf{83.84} & \textbf{94.70} & \textbf{77.82} & \textbf{12.16} & \textbf{66.12} & \textbf{66.93} \\
\textit{\small $\Delta$ vs Fuse-All} & \textit{\small +5.3} & \textit{\small +0.6} & \textit{\small +2.5} & \textit{\small +2.0} & \textit{\small +8.9} & \textit{\small +3.9} \\
\midrule
\multicolumn{7}{l}{\emph{Protocol H (8-shot, $M{=}15$)}} \\
Fuse-All  & 73.89 & 89.01 & 70.10 & 9.65 & 52.42 & 59.02 \\
Random    & 75.11 & 91.78 & 69.84 & 9.77 & 54.88 & 60.28 \\
Diversity & 78.18 & 86.91 & 68.87 & 9.67 & 56.67 & 60.06 \\
\textbf{KAGES (ours)} & \textbf{80.20} & \textbf{92.82} & \textbf{72.36} & \textbf{10.57} & \textbf{67.92} & \textbf{64.77} \\
\textit{\small $\Delta$ vs Fuse-All} & \textit{\small +6.3} & \textit{\small +3.8} & \textit{\small +2.3} & \textit{\small +0.9} & \textit{\small +15.5} & \textit{\small +5.8} \\
\midrule
\multicolumn{7}{l}{\emph{Protocol F (full data)}} \\
Fuse-All  & 85.07 & 95.90 & 80.05 & 16.51 & 79.71 & 71.45 \\
Random    & 84.66 & 95.26 & 80.07 & 19.43 & 81.87 & 72.26 \\
Diversity & 86.21 & 94.35 & 79.85 & 18.51 & 81.31 & 72.05 \\
\textbf{KAGES (ours)} & \textbf{88.19} & \textbf{96.25} & \textbf{82.96} & \textbf{20.07} & \textbf{86.41} & \textbf{74.78} \\
\textit{\small $\Delta$ vs Fuse-All} & \textit{\small +3.1} & \textit{\small +0.4} & \textit{\small +2.9} & \textit{\small +3.6} & \textit{\small +6.7} & \textit{\small +3.3} \\
\bottomrule
\end{tabular}
\end{table}

\begin{figure}[t]
\centering
\includegraphics[width=0.74\textwidth]{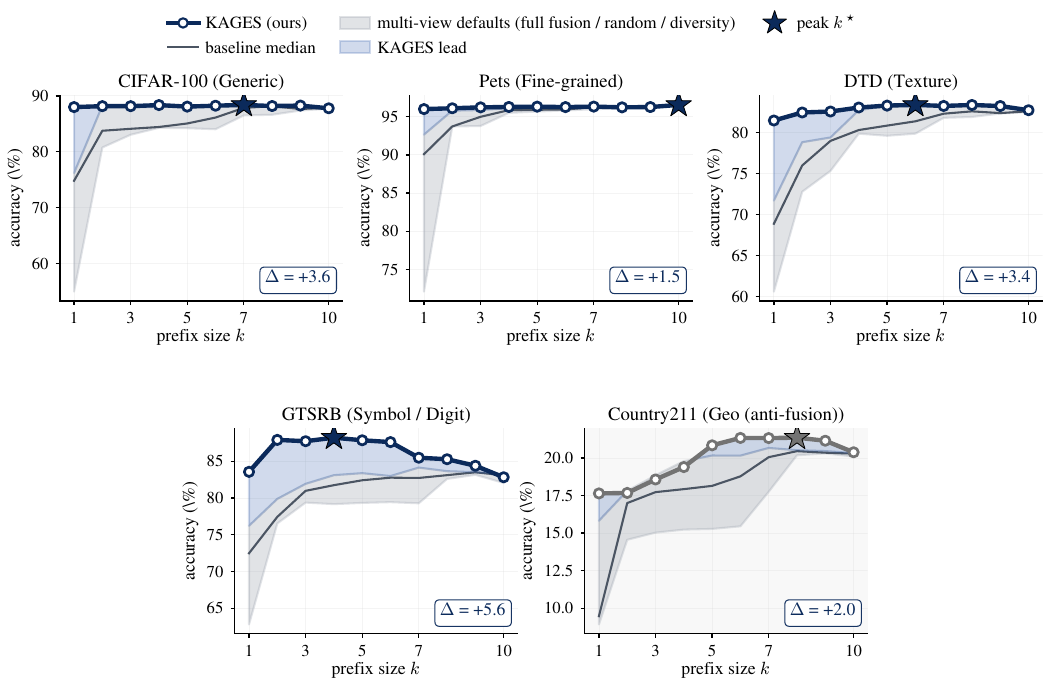}
\caption{\textbf{KAGES front-loads high-utility encoders and stays at the top of the multi-view baseline envelope (Protocol F, full data).} Per-dataset prefix accuracy along each method's ordering. KAGES (navy, star at its peak) is plotted over the gray envelope spanning the standard multi-view defaults (full fusion, random, and diversity selection), with their per-$k$ median in gray. KAGES reaches a strong representation at small $k$ on every regime while the defaults climb slowly, with the clearest separation on the texture and symbol tasks. Country211 is the anti-fusion regime in which CLIP alone saturates geolocation, so all methods converge.}
\label{fig:path-curves}
\end{figure}

\textbf{KAGES beats the multi-view defaults on every dataset, and the margin is wide where fusion matters.} Table~\ref{tab:cross-protocol} reports per-dataset AULC on all three protocols. KAGES is best in every cell against full fusion, random and diversity selection. The advantage concentrates in the regimes that genuinely reward heterogeneous fusion: on the structured-symbol task GTSRB it beats blind full fusion by $+8.9$, $+15.5$ and $+6.7$ points on B, H and F, and on the geo regime at full data by $+3.6$. On saturated natural-image tasks such as Pets the ordering still matters but the margin is smaller, since a single strong encoder already approaches the peak. Averaged over the five regimes KAGES leads Fuse-All by $+3.9$, $+5.8$ and $+3.3$ points. The path-curve view (Figure~\ref{fig:path-curves}) shows the mechanism: KAGES front-loads high-utility encoders and is already strong at small $k$, while the defaults climb slowly and recover late, if at all.

\textbf{KAGES also matches or exceeds stronger subset-selection baselines.} Beyond the more-is-better defaults, we compare against two strong subset-selection methods on the same pool: DPP MAP inference and a facility-location submodular objective. Both are far better than full fusion, yet KAGES still leads on average AULC on every protocol (B: $66.93$ vs $66.13$ submodular and $65.21$ DPP; H: $64.77$ vs $63.49$ and $62.13$; F: $74.78$ vs $73.98$ and $73.51$), confirming that the label-aware kernel-alignment criterion adds value over diversity- and coverage-driven selection.

\textbf{First-pick consistently favours self-supervised encoders, and adversarial paradigms are rejected.} On four natural-image regimes (CIFAR-100, Pets, DTD, Country211) KAGES's first pick is DINOv2-base (paradigm B) on all 5 seeds; on Structured Symbol / Digit (GTSRB) it picks the class-supervised ResNet-50 (3/5 seeds) or CLIP (2/5). \emph{Paradigms C (generative) and E (task-expert) are never first-picked}, consistent with both being structurally orthogonal to classification.

\paragraph{Cost-penalty ablation ($\tau$).} Sweeping $\tau \in \{0, 0.01, 0.05, 0.1, 0.5, 1.0\}$ (3 seeds) gives mean AULC $\{66.9, 66.7, 65.5, 64.6, 62.9, 62.5\}$, maximised at $\tau = 0$. Larger $\tau$ makes the ordering track encoder dimension and degrades fast on GTSRB, so the main results use the pure $\mathrm{CKA}(K_S^c, K_Y^c)$ criterion.

\subsection{Beyond classification}
\label{sec:cross-modal}

KAGES's advantage extends to two settings outside the recognition-regime taxonomy, and in the language setting the peak-then-decline reappears. (i)~\emph{Image retrieval}: on the same encoder pool with L2-normalised concatenated features and nearest-neighbour search, Recall@$1$ saturates later than classification, peaking at $k = 8$ on Pets and $k = 10$ on both DTD and GTSRB, with KAGES matching the peak in each case (full curves in supplementary App.~\ref{supp-app-y:retrieval-full}). (ii)~\emph{Frozen-LLM fusion}: treating six base LLMs (Llama-3, Qwen-2.5, Mistral, Gemma, Yi-1.5, DeepSeek) as views and training a linear probe on MMLU and ARC-Challenge, accuracy peaks then declines as in vision; KAGES reaches $75.0\%$ on MMLU at $k = 2$ and $91.0\%$ on ARC at $k = 4$, hitting the peak at a smaller $k$ than fixed-order or diversity-greedy baselines (Figure~\ref{fig:llm-kcurve}; full results in supplementary App.~\ref{supp-app-y:llm}).

\begin{figure}[t]
\centering
\includegraphics[width=0.62\textwidth]{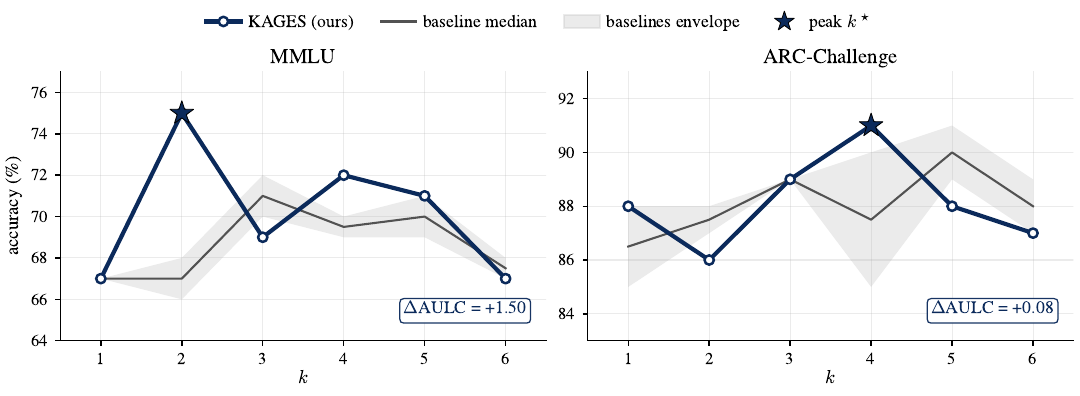}
\caption{\textbf{Accuracy curves over fused LLM count} for MMLU and ARC-Challenge. The gray band spans diversity-greedy and fixed-order baselines, the gray line shows their median, and KAGES is shown in navy with stars marking its peak. The peak-then-decline pattern reproduces at the LLM layer.}
\label{fig:llm-kcurve}
\end{figure}

\section{Related Work}
\label{sec:related}

\paragraph{Foundation-model fusion and transferability scoring.} Classical multi-view learning~\citep{xu2013multi,li2019mvlearn,trosten2023mvclust} fuses a small fixed set of feature views and offers no recipe for selecting views out of a heterogeneous pool. Multi-encoder MLLMs scale this to multiple frozen vision backbones: Eagle~\citep{shi2024eagle}, Cambrian-1~\citep{tong2024cambrian}, BRAVE~\citep{kar2024brave}, and MoVA~\citep{zong2024mova} study encoder combinations or routing within relatively small, predefined expert pools. Training-free transferability scores (LogME~\citep{you2021logme,you2022ranking}, H-Score~\citep{bao2019hscore}, GBC~\citep{pandy2022gbc}, LEEP~\citep{nguyen2020leep}, NCE~\citep{tran2019transferability}, TransRate~\citep{huang2022transrate}, EMMS~\citep{meng2023emms}, PACTran~\citep{ding2022pactran}, OTCE~\citep{tan2021otce,tan2024fotce}, DISCO~\citep{zhang2025disco}) score a single encoder and provide no notion of how a candidate interacts with an already-selected pool. Pool-level approaches are sparse: OSBORN~\citep{vimal2023osborn} gives a $(1-1/e)$ guarantee at a fixed $k=3$; model-zoo studies~\citep{dong2022zood,chen2023zoo} select at the recipe level. Across these lines, encoder choice is either constrained to predefined pools or scored one model at a time, without jointly constructing a complete ordering over a large heterogeneous pool. We depart on three counts that, to our knowledge, no prior work combines: we characterise order-dependent peak-then-decline, including an absolute decline on structured-symbol tasks, rather than only diminishing returns on a handful of encoders; we trace it to the frozen feature-fusion regime through a gain--cost crossing; and we turn that account into a single training-free selector with conditional prefix-wise guarantees that produces a high-quality ordering and provides an optional score-based stopping rule from frozen-feature statistics alone.

\paragraph{Kernel and information-theoretic selection.} Classical scalar feature selection (mRMR~\citep{peng2005feature}, JMI~\citep{yang1999jmi}, Brown et al.~\citep{brown2012conditional}) and similarity tools (CKA~\citep{kornblith2019similarity}, HSIC~\citep{gretton2005measuring}, Platonic Hypothesis~\citep{huh2024platonic}) compare representations or select scalar features but do not produce a path-quality recommendation across prefix sizes. The closest line is greedy kernel-based view selection: HSIC-mRMR / HSIC-Lasso~\citep{song2012feature,yamada2014hsiclasso} and Kernel-JMI~\citep{brown2012conditional} select scalar features via hand-weighted pairwise scores at cost growing with feature dimension. KAGES instead treats each encoder's representation as one atomic unit via the single closed-form pure alignment utility $U(S) = \mathrm{CKA}(K_S^c, K_Y^c)$, with kernel-level candidate evaluation independent of encoder dimension (Section~\ref{sec:ker-evid}); its exact positive-gain criterion (Proposition~\ref{prop:exact-margin}) reproduces relevance-minus-redundancy with adaptive weight $A_S / (2 B_S)$ rather than tuned, and its conditional prefix-wise fixed-budget guarantee (Corollary~\ref{cor:surrogate-peak}) applies at every cardinality for which the stated monotonicity and submodularity-ratio conditions hold.

\section{Discussion, Limitations, and Future Work}
\label{sec:discussion}

\paragraph{Limitations.} The normalised CKA objective is neither generally monotone nor generally submodular because the denominator in Eq.~\ref{eq:expansion} can outweigh a non-negative numerator increment and introduces cross-overlap terms. Our $(1-e^{-\gamma})$ guarantee is therefore conditional on monotonicity and a positive submodularity ratio; establishing when empirical frozen-encoder pools satisfy these conditions is future work. The generative and task-expert paradigms are handled only at the selector level: KAGES never first-picks them, and fusing them remains future work.

\section{Conclusion}
We have recast multi-view learning for the foundation-model era, where a view is one of thousands of frozen encoders and the design problem becomes \emph{view-set composition}: which encoders to fuse, and how many. We showed that the classical complementarity principle fails at this scale: the best achievable accuracy peaks at a small set and then plateaus or, on structured-symbol tasks, declines. We proposed \emph{KAGES}, a training-free selector with encoder-dimension-independent kernel-level candidate evaluation and a conditional $(1{-}e^{-\gamma})$ prefix-wise fixed-budget guarantee under the stated monotonicity and submodularity-ratio conditions. KAGES ranks first among multi-view selection methods across five regimes, five pre-training paradigms, and three evaluation settings, beating full fusion and diversity-based selection while approaching the oracle. As public model hubs keep growing, choosing which encoders to fuse, rather than how to fuse a fixed few, becomes the central question of multi-view learning, and a training-free, theory-backed answer to it is a step toward using the world's frozen models at scale.

\subsubsection*{Acknowledgments and Authorship Note}
We thank Prof. Zongbo Han for his guidance and helpful discussions. Prof. Han declined to be listed as an author of the final manuscript. Consequently, the earlier BMVC 2026 submission was withdrawn after the Area Chair decision and before publication. This preprint lists only the authors who approved the present version.

\bibliographystyle{iclr2027_conference}
\bibliography{references}

\clearpage
\let\label\supppaperlabel
\renewcommand{\thesection}{A.\arabic{section}}
\renewcommand{\thefigure}{A.\arabic{figure}}
\renewcommand{\thetable}{A.\arabic{table}}
\renewcommand{\theequation}{A.\arabic{equation}}
\renewcommand{\thetheorem}{A.\arabic{theorem}}
\renewcommand{\theproposition}{A.\arabic{proposition}}
\renewcommand{\thecorollary}{A.\arabic{corollary}}
\renewcommand{\thelemma}{A.\arabic{lemma}}
\renewcommand{\theassumption}{A.\arabic{assumption}}
\renewcommand{\thedefinition}{A.\arabic{definition}}
\def\theHsection{appendix.\arabic{section}}
\def\theHsubsection{appendix.\arabic{section}.\arabic{subsection}}
\def\theHsubsubsection{appendix.\arabic{section}.\arabic{subsection}.\arabic{subsubsection}}
\def\theHfigure{appendix.\arabic{figure}}
\def\theHtable{appendix.\arabic{table}}
\def\theHequation{appendix.\arabic{equation}}
\def\theHtheorem{appendix.\arabic{theorem}}
\def\theHproposition{appendix.\arabic{proposition}}
\def\theHcorollary{appendix.\arabic{corollary}}
\def\theHlemma{appendix.\arabic{lemma}}
\def\theHassumption{appendix.\arabic{assumption}}
\def\theHdefinition{appendix.\arabic{definition}}
\setcounter{section}{0}\setcounter{figure}{0}\setcounter{table}{0}\setcounter{equation}{0}
\setcounter{theorem}{0}\setcounter{proposition}{0}\setcounter{corollary}{0}
\setcounter{lemma}{0}\setcounter{assumption}{0}\setcounter{definition}{0}

\section*{Appendix Overview}

The appendix supports specific portions of the main paper:

\begin{itemize}[nosep,leftmargin=*]
  \item \textbf{App.~\ref{app-y:proof-greedy}:} conditional $(1-e^{-\gamma})$ greedy approximation guarantee under a standard submodularity-ratio condition, together with a mutual-information specialization that recovers the $(1-1/e)$ bound.
  \item \textbf{App.~\ref{app-y:T1}:} ordering-consistency proof for $\Delta J$ (Theorem~\ref{thm:y:T1}), supporting the empirical-ordering claim at the end of Section~\ref{main-sec:prop} of the main paper.
  \item \textbf{App.~\ref{app-y:exp}:} experimental configuration, full results tables, ordering details, risk decomposition, and proof of the conditional marginal-utility crossing (Theorem~\ref{main-thm:peak}). Supports Sections~\ref{main-sec:phenomenon} and~\ref{main-sec:experiments} of the main paper.
  \item \textbf{App.~\ref{app-y:llm}:} cross-modal validation on frozen LLMs (supports Section~\ref{main-sec:cross-modal}).
  \item \textbf{App.~\ref{app-y:future}:} future work, online and sequential extensions.
\end{itemize}

\noindent The notation of the main paper carries over: $\Ms$ is the encoder pool of size $M$, and $\hpi$ is the KAGES greedy ordering on the labelled task, as defined in Section~\ref{main-sec:method} of the main paper.


\section{Greedy Approximation Guarantee under a Submodularity Ratio}
\label{app-y:proof-greedy}

We give the conditional fixed-budget greedy guarantee used in the main paper. The result is stated for a generic non-negative set utility $U$ and applies only when the stated monotonicity and submodularity-ratio conditions hold. We then give a population mutual-information specialization for which these conditions follow from conditional independence.

For $m \notin S$, write $\Delta U(m \mid S) := U(S \cup \{m\}) - U(S)$.

\begin{theorem}[Conditional greedy guarantee under a submodularity ratio]
\label{thm:y:greedy}
Let $U:2^{\Ms}\to\mathbb{R}_{\ge 0}$ satisfy $U(\emptyset)=0$ and be monotone non-decreasing. Fix a cardinality budget $K$. Suppose there is a constant $\gamma\in(0,1]$ such that, for every $L\subseteq\Ms$ and every $T\subseteq\Ms\setminus L$ with $|T|\le K$,
\begin{equation}
\label{eq:y:submod-ratio}
\sum_{m\in T}\Delta U(m\mid L)
\;\ge\;
\gamma\bigl[U(L\cup T)-U(L)\bigr].
\end{equation}
Let $S_0=\emptyset$ and let greedy choose
\[
m_k\in\arg\max_{m\notin S_{k-1}}\Delta U(m\mid S_{k-1}),
\qquad
S_k=S_{k-1}\cup\{m_k\}.
\]
Then
\[
U(S_K)
\;\ge\;
\bigl(1-e^{-\gamma}\bigr)
\max_{|S|\le K}U(S).
\]
\end{theorem}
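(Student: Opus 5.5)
The plan is the standard Das--Kempe argument: show that each greedy step closes a fixed fraction $\gamma/K$ of the remaining optimality gap, then iterate. Let $O\in\arg\max_{|S|\le K}U(S)$ and write $\mathrm{OPT}:=U(O)$. For each $k=0,\ldots,K-1$ we apply the submodularity-ratio condition \eqref{eq:y:submod-ratio} with $L=S_k$ and $T=O\setminus S_k$. This choice is admissible because $T\subseteq\Ms\setminus L$ and $|T|\le|O|\le K$. It gives
\[
\sum_{m\in O\setminus S_k}\Delta U(m\mid S_k)\;\ge\;\gamma\bigl[U(S_k\cup O)-U(S_k)\bigr]\;\ge\;\gamma\bigl[\mathrm{OPT}-U(S_k)\bigr],
\]
where the second inequality uses monotonicity, $U(S_k\cup O)\ge U(O)$. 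If $O\setminus S_k=\varnothing$, then $O\subseteq S_k$, so $U(S_k)\ge\mathrm{OPT}$ by monotonicity and, again by monotonicity, $U(S_K)\ge\mathrm{OPT}$; the claim is then immediate. We may therefore assume $O\setminus S_k\neq\varnothing$.

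The sum on the left has at most $K$ terms. Each term is at most the greedy gain $\Delta U(m_{k+1}\mid S_k)$, since greedy maximises over all $m\notin S_k$ and $O\setminus S_k$ is a subset of those candidates. Hence
\[
U(S_{k+1})-U(S_k)\;\ge\;\frac{\gamma}{K}\bigl[\mathrm{OPT}-U(S_k)\bigr].
\]
If $\mathrm{OPT}-U(S_k)<0$ this step is not needed; monotonicity of the prefixes keeps $U(S_K)\ge U(S_k)>\mathrm{OPT}$ and we are done. Otherwise, writing $\delta_k:=\mathrm{OPT}-U(S_k)\ge0$, the inequality becomes the contraction $\delta_{k+1}\le(1-\gamma/K)\,\delta_k$.

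Unrolling from $\delta_0=\mathrm{OPT}-U(\emptyset)=\mathrm{OPT}$ (using $U(\emptyset)=0$) yields $\delta_K\le(1-\gamma/K)^K\mathrm{OPT}\le e^{-\gamma}\mathrm{OPT}$. The last step uses $1-x\le e^{-x}$ and $\mathrm{OPT}\ge0$, which holds because $U$ is nonnegative. This gives $U(S_K)\ge(1-e^{-\gamma})\,\mathrm{OPT}$. Since the bound is proved for an arbitrary budget $K$ and greedy prefixes are nested, the same argument with $K$ replaced by $k$ gives Corollary~\ref{main-cor:surrogate-peak}, provided the ratio condition holds for $|T|\le k$; it does, because it holds for all $|T|\le K$ with $k\le K$.

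The main care point is the bookkeeping rather than any deep step. The ratio condition must be invoked with $L=S_k$ arbitrary and $|T|\le K$, and the hypothesis as stated quantifies over every $L$, so this is fine. We must also handle signs: if some $\delta_k$ becomes negative, the contraction argument is replaced by the observation that monotonicity preserves $U(S_K)\ge\mathrm{OPT}$. Ties in the $\arg\max$ are harmless because any maximiser satisfies the required inequality.
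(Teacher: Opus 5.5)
Your proposal is correct and is essentially the paper's proof: apply the ratio condition with $L=S_k$ and $T=O\setminus S_k$, use monotonicity to lower-bound the greedy gain by $\gamma/K$ times the current gap $U(O)-U(S_k)$, and unroll the contraction using $(1-\gamma/K)^K\le e^{-\gamma}$. Your extra handling of a negative gap and of $O\setminus S_k=\varnothing$ only makes explicit the sign conditions the paper uses silently, as does one fact you also use implicitly: passing from $|O\setminus S_k|$ terms to the factor $K$ needs the greedy gain to be nonnegative, which holds by monotonicity.
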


\begin{proof}
Let $O\in\arg\max_{|S|\le K}U(S)$ and write $G_k:=U(O)-U(S_k)$. By monotonicity and~\eqref{eq:y:submod-ratio},
\[
\sum_{m\in O\setminus S_k}\Delta U(m\mid S_k)
\;\ge\;
\gamma\bigl[U(S_k\cup O)-U(S_k)\bigr]
\;\ge\;
\gamma G_k.
\]
Since $|O\setminus S_k|\le K$, the largest available marginal gain is at least $\gamma G_k/K$. Greedy therefore satisfies
\[
U(S_{k+1})-U(S_k)\ge \frac{\gamma}{K}G_k,
\qquad
G_{k+1}\le\left(1-\frac{\gamma}{K}\right)G_k.
\]
Iterating from $S_0=\emptyset$ gives
\[
G_K
\le
\left(1-\frac{\gamma}{K}\right)^KU(O)
\le e^{-\gamma}U(O),
\]
which proves the claim.
\end{proof}

\begin{corollary}[Prefix-wise fixed-budget guarantee]
\label{cor:y:prefix}
Under the hypotheses of Theorem~\ref{thm:y:greedy}, for every $k=1,\ldots,K$, the same greedy ordering satisfies
\[
U(S_k)\ge(1-e^{-\gamma})\max_{|S|\le k}U(S).
\]
\end{corollary}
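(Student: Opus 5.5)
The plan is to rerun the argument of Theorem~\ref{thm:y:greedy} with the budget $K$ replaced by each prefix size $k\le K$. The greedy sequence never depends on the budget: the step $m_{j+1}\in\arg\max_{m\notin S_j}\Delta U(m\mid S_j)$ uses only $S_j$. So the prefix $S_k$ of the full ordering is exactly the set that greedy run with budget $k$ would return. It therefore suffices to show that the hypotheses of Theorem~\ref{thm:y:greedy} hold with $K$ replaced by $k$.

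Monotonicity, non-negativity and $U(\emptyset)=0$ do not depend on the budget. The submodularity-ratio condition~\eqref{eq:y:submod-ratio} is assumed for all $T$ with $|T|\le K$. Since $k\le K$, it holds in particular for all $T$ with $|T|\le k$, with the same $\gamma$. Hence Theorem~\ref{thm:y:greedy} applied at budget $k$ gives $U(S_k)\ge(1-e^{-\gamma})\max_{|S|\le k}U(S)$.

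To make the argument self-contained, I will also spell out the contraction step directly. Let $O_k\in\arg\max_{|S|\le k}U(S)$ and set $G_j:=U(O_k)-U(S_j)$ for $j=0,\ldots,k-1$. Monotonicity and the ratio condition applied with $L=S_j$ and $T=O_k\setminus S_j$, where $|T|\le k\le K$, give
\[
\sum_{m\in O_k\setminus S_j}\Delta U(m\mid S_j)\ge\gamma G_j .
\]
The largest single gain is therefore at least $\gamma G_j/k$, so $G_{j+1}\le(1-\gamma/k)G_j$. Iterating $k$ times from $G_0=U(O_k)$ gives $G_k\le(1-\gamma/k)^kU(O_k)\le e^{-\gamma}U(O_k)$, which is the claim.

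There is no real obstacle here. The only points needing care are that the greedy choice is budget-independent, so prefixes coincide with budget-$k$ greedy outputs (with the same tie-breaking), and that the constraint $|T|\le K$ on the ratio condition covers the smaller sets $|O_k\setminus S_j|\le k$ that appear in the argument.
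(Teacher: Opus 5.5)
Your proposal is correct and follows the paper's proof: apply Theorem~\ref{thm:y:greedy} at budget $k$, using that the greedy choices do not depend on the budget, so the first $k$ picks of the full traversal are exactly the budget-$k$ greedy output. Your explicit remark that the ratio condition for $|T|\le K$ already covers $|T|\le k$ is a check the paper leaves implicit, and your written-out contraction is simply the theorem's own argument rerun at budget $k$.
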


\begin{proof}
Apply Theorem~\ref{thm:y:greedy} with the budget set to $k$. The first $k$ greedy choices are unchanged when the traversal is subsequently continued to budget $K$.
\end{proof}

For the information-theoretic specialization, let
\[
\bar F_S := (F_m)_{m\in S}
\]
denote the joint collection of the selected encoder features.

\begin{lemma}[Conditional independence implies information submodularity]
\label{lem:y:submod}
The set function $f(S):=I(\bar F_S;Y)$ is normalized and monotone non-decreasing. If the encoder features are mutually independent conditional on $Y$, then $f$ is submodular and hence satisfies~\eqref{eq:y:submod-ratio} with $U=f$ and $\gamma=1$.
\end{lemma}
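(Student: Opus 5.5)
The proof has three parts: the normalisation and monotonicity claims, the submodularity claim, and the passage to the ratio condition with $\gamma=1$.

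\emph{Normalisation and monotonicity.} With $\bar F_\emptyset$ a constant, $f(\emptyset)=I(\text{const};Y)=0$. For $S\subseteq S'$, the chain rule gives
\[
f(S')=f(S)+I(\bar F_{S'\setminus S};Y\mid \bar F_S).
\]
Conditional mutual information is non-negative, so $f(S')\ge f(S)$.

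\emph{Submodularity.} I will show diminishing returns: for $A\subseteq B$ and $m\notin B$, $\Delta f(m\mid A)\ge \Delta f(m\mid B)$. By the chain rule, $\Delta f(m\mid S)=I(F_m;Y\mid \bar F_S)$. I expand it in entropies,
\[
I(F_m;Y\mid \bar F_S)=H(F_m\mid \bar F_S)-H(F_m\mid Y,\bar F_S).
\]
Conditional independence of the features given $Y$ makes $F_m$ independent of $\bar F_S$ given $Y$. Hence $H(F_m\mid Y,\bar F_S)=H(F_m\mid Y)$, and this term does not depend on $S$. Therefore
\[
\Delta f(m\mid S)=H(F_m\mid \bar F_S)-H(F_m\mid Y).
\]
Conditioning reduces entropy, so $H(F_m\mid \bar F_S)$ is non-increasing in $S$. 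It follows that $\Delta f(m\mid A)\ge\Delta f(m\mid B)$ whenever $A\subseteq B$, which is submodularity.

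For continuous features I will use differential entropies, or, more robustly, argue directly with mutual informations. The identity $I(F_m;Y\mid\bar F_S)=I(F_m;Y)-I(F_m;\bar F_S)$ follows from the chain rule applied to $I(F_m;(Y,\bar F_S))$, together with $I(F_m;\bar F_S\mid Y)=0$. The term $I(F_m;\bar F_S)$ is non-decreasing in $S$ by monotonicity of mutual information. This form avoids any concern about the sign of differential entropy.

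\emph{Ratio condition with $\gamma=1$.} Fix $L$ and $T=\{t_1,\dots,t_r\}$ disjoint from $L$. I telescope
\[
f(L\cup T)-f(L)=\sum_j \Delta f\bigl(t_j\mid L\cup\{t_1,\dots,t_{j-1}\}\bigr).
\]
By submodularity, each summand is at most $\Delta f(t_j\mid L)$. Summing gives \eqref{eq:y:submod-ratio} with $\gamma=1$, for any $|T|$ and in particular $|T|\le K$. Non-negativity of $f$ holds, so Theorem~\ref{thm:y:greedy} applies and gives the $(1-1/e)$ bound.

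\emph{Main obstacle.} The delicate step is the submodularity identity. It depends on using conditional independence exactly once, to remove the dependence of $I(F_m;\bar F_S\mid Y)$, or equivalently of $H(F_m\mid Y,\bar F_S)$, on $S$. It also depends on justifying the chain-rule manipulations in the continuous setting, which is why I will work in the mutual-information form.
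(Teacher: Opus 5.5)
Your proposal is correct and follows essentially the same route as the paper. Normalisation and monotonicity come from the chain rule, and diminishing returns comes from the entropy expansion $I(F_m;Y\mid\bar F_S)=H(F_m\mid\bar F_S)-H(F_m\mid Y)$ under conditional independence together with ``conditioning cannot increase entropy''. Your mutual-information form $I(F_m;Y)-I(F_m;\bar F_S)$ handles continuous features, and your explicit telescoping shows how submodularity gives the ratio condition with $\gamma=1$; both make rigorous points that the paper only asserts briefly.
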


\begin{proof}
Normalization is immediate. Monotonicity follows from the chain rule,
\[
f(A\cup\{m\})-f(A)=I(F_m;Y\mid \bar F_A)\ge0.
\]
For $A\subseteq B$ and $m\notin B$,
\[
I(F_m;Y\mid \bar F_A)
=
H(F_m\mid \bar F_A)-H(F_m\mid Y,\bar F_A).
\]
Conditional independence gives $H(F_m\mid Y,\bar F_A)=H(F_m\mid Y)$, and similarly for $B$. Since conditioning cannot increase entropy,
$H(F_m\mid \bar F_A)\ge H(F_m\mid \bar F_B)$, so
\[
I(F_m;Y\mid \bar F_A)
\ge
I(F_m;Y\mid \bar F_B).
\]
This is the diminishing-returns condition for submodularity. A normalized monotone submodular function satisfies~\eqref{eq:y:submod-ratio} with $\gamma=1$.
\end{proof}

\begin{corollary}[$(1-1/e)$ information-gain guarantee]
\label{cor:y:greedy-mi}
Under the conditional-independence condition of Lemma~\ref{lem:y:submod}, greedy maximization of $f(S)=I(\bar F_S;Y)$ satisfies
\[
f(S_K^{\mathrm{greedy}})
\ge
(1-1/e)\max_{|S|\le K}f(S).
\]
\end{corollary}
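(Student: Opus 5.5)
This corollary follows by chaining Lemma~\ref{lem:y:submod} into Theorem~\ref{thm:y:greedy}. The work is to check that $f(S)=I(\bar F_S;Y)$ meets every hypothesis of that theorem, and then to evaluate the resulting constant at $\gamma=1$.

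The hypotheses are checked as follows.
\begin{itemize}[nosep,leftmargin=*]
  \item \emph{Non-negativity and normalization.} Mutual information is non-negative, so $f:2^{\Ms}\to\mathbb{R}_{\ge0}$. The convention $\bar F_\emptyset$ is trivial gives $I(\bar F_\emptyset;Y)=0$, so $f(\emptyset)=0$.
  \item \emph{Monotonicity.} This is part of Lemma~\ref{lem:y:submod}, via the chain rule $f(A\cup\{m\})-f(A)=I(F_m;Y\mid\bar F_A)\ge0$.
  \item \emph{Submodularity ratio with $\gamma=1$.} Under conditional independence given $Y$, the lemma gives diminishing returns $\Delta f(m\mid A)\ge\Delta f(m\mid B)$ for $A\subseteq B$ and $m\notin B$.
\end{itemize}

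For the third item I will write out the short argument rather than quote the lemma's last sentence. Fix $L$ and $T=\{t_1,\dots,t_j\}\subseteq\Ms\setminus L$. Telescope
\[
f(L\cup T)-f(L)=\sum_{i=1}^{j}\Delta f\bigl(t_i\mid L\cup\{t_1,\dots,t_{i-1}\}\bigr).
\]
By diminishing returns, each summand is bounded by $\Delta f(t_i\mid L)$. This gives~\eqref{eq:y:submod-ratio} with $\gamma=1$, for every $|T|$ and in particular for $|T|\le K$.

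Theorem~\ref{thm:y:greedy} then applies to the greedy sequence maximizing $\Delta f$, which is Algorithm~\ref{alg:kages} run with utility $f$ and $\tau=0$. It yields $f(S_K^{\mathrm{greedy}})\ge(1-e^{-1})\max_{|S|\le K}f(S)=(1-1/e)\max_{|S|\le K}f(S)$. By Corollary~\ref{cor:y:prefix}, the same bound holds at every prefix size.

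The only delicate point is whether Theorem~\ref{thm:y:greedy} requires the greedy step to be an exact arg-max of the true marginal gain. Here it is exact by definition, since we are working with the population quantity $f$. I would also remark that ties in the arg-max do not matter: the proof only uses that the chosen gain is at least $\gamma G_k/K$. Beyond that, the argument is routine and has no real obstacle.
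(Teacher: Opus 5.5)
Your proposal is correct and follows the paper's route. The corollary comes from feeding the conditional-independence lemma (normalization, monotonicity, and diminishing returns, hence submodularity ratio $\gamma=1$) into the submodularity-ratio greedy theorem and evaluating $1-e^{-\gamma}$ at $\gamma=1$; your telescoping check that diminishing returns gives the ratio condition with $\gamma=1$ spells out the step the paper only asserts in the lemma's final sentence.
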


\paragraph{Scope of the guarantee.}
Theorem~\ref{thm:y:greedy} is conditional: it does not assert that every frozen-encoder pool, or the empirical CKA objective used by KAGES, is automatically monotone or has a positive submodularity ratio. Because CKA contains a normalization denominator, non-negative target alignment of a candidate kernel alone does not guarantee a non-negative marginal CKA gain; the exact one-step condition is derived in App.~\ref{app-y:exact-margin}. The theorem and Corollary~\ref{cor:y:prefix} provide fixed-budget prefix guarantees only when the stated conditions hold.


\section{Ordering Consistency for $\Delta J$}
\label{app-y:T1}

KAGES (Algorithm~\ref{main-alg:kages} in the main paper) acts on the empirical utility $\widehat J_n$ computed from $n$ labelled samples. This appendix shows that, under uniform subgaussian concentration of the marginal gains and a non-vanishing per-step population margin, empirical KAGES recovers the population greedy ordering with exponentially high probability.

\subsection{Setup and notation}

Let $\Ms = \{1, \ldots, M\}$ be a fixed encoder pool with $M\ge2$. Write $J(S)$ for a deterministic population target utility (for example, a population limit of the alignment score when such a limit exists) and $\widehat J_n(S)$ for its $n$-sample empirical estimator. The theorem below is generic and relies only on the stated concentration and bias assumptions; it does not derive those assumptions for empirical CKA. Marginal gains are
\[
\Delta J(m \mid S) := J(S \cup \{m\}) - J(S),
\qquad
\Delta\widehat J_n(m \mid S) := \widehat J_n(S \cup \{m\}) - \widehat J_n(S).
\]
The population greedy trajectory is $(m_t^\star,S_t^\star)_{t=1}^{M}$ with $S_0^\star=\emptyset$ and
\[
m_t^\star\in\arg\max_{m\notin S_{t-1}^\star}\Delta J(m\mid S_{t-1}^\star),
\qquad
S_t^\star=S_{t-1}^\star\cup\{m_t^\star\}.
\]
The empirical trajectory $(\hat m_t,\hat S_t)$ is defined analogously using $\Delta\widehat J_n$. Ties are resolved by the same fixed deterministic rule in both trajectories.

\subsection{Assumptions}

\begin{assumption}[Sample independence and pool finiteness]
\label{ass:y:T1iid}
The $n$ labelled observations are i.i.d.\ from a fixed joint distribution, and the pool cardinality $M$ is fixed independently of $n$.
\end{assumption}

\begin{assumption}[Uniform subgaussian concentration with bias]
\label{ass:y:T1subg}
For every $m\in\Ms$ and every $S\subseteq\Ms\setminus\{m\}$,
\[
\Pr\!\left[
\left|\Delta\widehat J_n(m\mid S)-\mathbb E\,\Delta\widehat J_n(m\mid S)\right|>t
\right]
\le
2\exp\!\left(-\frac{nt^2}{2\sigma^2}\right)
\quad\text{for all }t>0,
\]
with $\sigma>0$. Moreover, the bias
\[
b_n(m,S):=\mathbb E\,\Delta\widehat J_n(m\mid S)-\Delta J(m\mid S)
\]
satisfies $|b_n(m,S)|\le B/\sqrt n$ uniformly over $(m,S)$.
\end{assumption}

\begin{assumption}[Greedy ordering margin]
\label{ass:y:T1order}
For every step $t\in\{1,\ldots,M-1\}$,
\[
\Delta J(m_t^\star\mid S_{t-1}^\star)
-
\max_{m\notin S_{t-1}^\star\cup\{m_t^\star\}}
\Delta J(m\mid S_{t-1}^\star)
\ge G
\]
for some $G>0$.
\end{assumption}

\subsection{Theorem statement}

\begin{theorem}[Ordering consistency for $\Delta J$]
\label{thm:y:T1}
Under Assumptions~\ref{ass:y:T1iid}--\ref{ass:y:T1order}, if $n\ge(4B/G)^2$, then
\[
\Pr\bigl[\hpi\neq(m_1^\star,\ldots,m_M^\star)\bigr]
\le
\bigl[M(M+1)-2\bigr]
\exp\!\left(-\frac{nG^2}{32\sigma^2}\right).
\]
Consequently, a target failure probability $\epsilon\in(0,1)$ is attained whenever
\[
n
\ge
\max\!\left\{
\left(\frac{4B}{G}\right)^2,
\frac{32\sigma^2}{G^2}
\log\!\frac{M(M+1)-2}{\epsilon}
\right\}.
\]
In particular, the sufficient sample size is
$O\!\bigl((B^2+\sigma^2[\log M+\log(1/\epsilon)])/G^2\bigr)$.
\end{theorem}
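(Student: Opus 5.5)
The plan is a union bound over the marginal gains evaluated along the \emph{population} greedy trajectory, followed by an induction showing that on the resulting good event the empirical trajectory cannot leave the population one. The reason to work with the population sets $S_{t-1}^\star$ is that they are deterministic. Assumption~\ref{ass:y:T1subg} is stated for fixed $(m,S)$, so it applies to $\Delta\widehat J_n(m\mid S_{t-1}^\star)$ directly. If instead we union-bounded over the random sets $\hat S_{t-1}$, we would have to control all $2^M$ subsets and would not obtain the stated constant. This conditioning issue is the only real subtlety; the rest is bookkeeping.

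\emph{Step 1 (good event).} Define
\[
\mathcal E:=\bigl\{\,|\Delta\widehat J_n(m\mid S_{t-1}^\star)-\Delta J(m\mid S_{t-1}^\star)|<G/2\ \text{ for all } t\in\{1,\ldots,M-1\},\ m\notin S_{t-1}^\star\,\bigr\}.
\]
Write each error as a deviation from the mean plus the bias $b_n(m,S_{t-1}^\star)$. When $n\ge(4B/G)^2$ we have $|b_n|\le B/\sqrt n\le G/4$. It therefore suffices that each deviation from the mean is below $G/4$. By Assumption~\ref{ass:y:T1subg} with $t=G/4$, each such deviation fails with probability at most $2\exp(-nG^2/(32\sigma^2))$.

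Step $t$ has $M-t+1$ candidates, so the number of pairs $(t,m)$ is
\[
\sum_{t=1}^{M-1}(M-t+1)=\frac{M(M+1)}{2}-1 .
\]
Multiplying by the factor $2$ gives $\Pr[\mathcal E^c]\le[M(M+1)-2]\exp(-nG^2/(32\sigma^2))$. Step $t=M$ is excluded because it has a single candidate and involves no decision.

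\emph{Step 2 (induction on $\mathcal E$).} Suppose $\hat S_{t-1}=S_{t-1}^\star$, which holds trivially for $t=1$. For any $m\notin S_{t-1}^\star\cup\{m_t^\star\}$, the event $\mathcal E$ and Assumption~\ref{ass:y:T1order} give
\[
\Delta\widehat J_n(m_t^\star\mid S_{t-1}^\star)>\Delta J(m_t^\star\mid S_{t-1}^\star)-G/2\ge \Delta J(m\mid S_{t-1}^\star)+G/2>\Delta\widehat J_n(m\mid S_{t-1}^\star).
\]
Hence the empirical maximiser is unique and equals $m_t^\star$, so the tie-breaking rule plays no role and $\hat S_t=S_t^\star$. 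At step $M$ only one candidate remains, so it is forced. Therefore $\mathcal E$ implies $\hpi=(m_1^\star,\ldots,m_M^\star)$, which yields the probability bound.

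\emph{Step 3 (sample size).} Require the bound from Step 1 to be at most $\epsilon$. Solving gives
\[
n\ge\frac{32\sigma^2}{G^2}\log\frac{M(M+1)-2}{\epsilon}.
\]
Combining this with the bias condition $n\ge(4B/G)^2$ gives the stated maximum. Finally, $\log(M(M+1))=O(\log M)$ and $\max\{a,b\}\le a+b$, which give the $O(\cdot)$ form of the sufficient sample size.
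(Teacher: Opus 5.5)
Your proposal is correct and follows the paper's proof essentially step for step: the paper also defines good events at accuracy $G/2$ on the deterministic population path $S_{t-1}^\star$ (bias at most $G/4$ plus deviation below $G/4$), runs the same margin-$G$ induction, and union-bounds over $\sum_{t=1}^{M-1}(M-t+1)$ pairs to obtain $[M(M+1)-2]\exp(-nG^2/(32\sigma^2))$. One cosmetic point, shared with the paper: the assumption controls strict deviations $>t$ while you need the event $\ge G/4$, which follows by letting $t\uparrow G/4$.
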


\subsection{Proof}

\paragraph{Step 1: deviation from the population marginal gain.}
If $n\ge(4B/G)^2$, then $|b_n(m,S)|\le G/4$. Hence, for any fixed $(m,S)$,
\begin{align*}
&\Pr\!\left[
\left|\Delta\widehat J_n(m\mid S)-\Delta J(m\mid S)\right|\ge\frac G2
\right]\\
&\qquad\le
\Pr\!\left[
\left|\Delta\widehat J_n(m\mid S)-\mathbb E\,\Delta\widehat J_n(m\mid S)\right|\ge\frac G4
\right]
\le
2\exp\!\left(-\frac{nG^2}{32\sigma^2}\right).
\end{align*}

\paragraph{Step 2: deterministic good events along the population path.}
For every population-path step $t\in\{1,\ldots,M-1\}$ and every candidate $m\notin S_{t-1}^\star$, define
\[
\mathcal E_{t,m}
:=
\left\{
\left|\Delta\widehat J_n(m\mid S_{t-1}^\star)-\Delta J(m\mid S_{t-1}^\star)\right|<\frac G2
\right\},
\]
and let $\mathcal E:=\bigcap_{t=1}^{M-1}\bigcap_{m\notin S_{t-1}^\star}\mathcal E_{t,m}$. These events are defined on the deterministic population greedy path before the empirical path is run, so no conditioning on a data-dependent previous-selection event is needed.

On $\mathcal E$, the empirical and population paths agree by induction. Suppose $\hat S_{t-1}=S_{t-1}^\star$. For any remaining competitor $m\neq m_t^\star$, the population margin and the two strict $G/2$ error bounds give
\begin{align*}
&\Delta\widehat J_n(m_t^\star\mid S_{t-1}^\star)
-
\Delta\widehat J_n(m\mid S_{t-1}^\star)\\
&\quad>
\Delta J(m_t^\star\mid S_{t-1}^\star)
-
\Delta J(m\mid S_{t-1}^\star)
-G
\ge0.
\end{align*}
Thus the empirical rule selects $m_t^\star$. Starting from $S_0^\star=\hat S_0=\emptyset$, induction proves equality of the paths through step $M-1$; the final step has only one candidate.

\paragraph{Step 3: one unconditional union bound.}
Step~1 and a union bound over the deterministic family of events yield
\begin{align*}
\Pr(\mathcal E^c)
&\le
2\sum_{t=1}^{M-1}(M-t+1)
\exp\!\left(-\frac{nG^2}{32\sigma^2}\right)\\
&=
\bigl[M(M+1)-2\bigr]
\exp\!\left(-\frac{nG^2}{32\sigma^2}\right).
\end{align*}
Since $\mathcal E$ implies recovery of the full ordering, this is the claimed failure-probability bound. Solving it for a target failure probability $\epsilon$ proves the sample-complexity statement.
\hfill$\blacksquare$

\paragraph{Remark.}
The polynomial prefactor is a conservative consequence of a uniform union bound along the population greedy path. The exponential dependence on $nG^2/\sigma^2$ and the logarithmic dependence on $M$ are the substantive parts of the result.


\section{Experimental Details, Theorems, and Full Tables}
\label{app-y:exp}

This appendix expands the experimental sections of the main paper (Sections~\ref{main-sec:phenomenon}, \ref{main-sec:experiments}). It contains the configuration, the formal risk decomposition, the proof of the conditional marginal-utility crossing (Theorem~\ref{main-thm:peak}), and the full results tables underlying the main-text figures.

\subsection{Experimental configuration}
\label{app-y:config}

\paragraph{Taxonomies (full).} The main paper introduces two taxonomies that recur throughout the experiments. \textbf{Encoder paradigm} partitions pre-trained vision backbones into five categories: (A) \emph{language-supervised} image--text contrastive models (CLIP~\citep{radford2021learning}, SigLIP~\citep{zhai2023sigmoid}); (B) \emph{self-supervised} models trained without labels (DINOv2~\citep{oquab2023dinov2}, MAE~\citep{he2022masked}, Data2Vec~\citep{baevski2022data2vec}); (C) \emph{generative / image-prior} encoders (Stable-Diffusion VAE~\citep{rombach2022sdvae}); (D) \emph{class-supervised} ImageNet-trained models (ConvNeXt~\citep{liu2022convnet}, ResNet-50~\citep{he2016deep}, ViT-B/16~\citep{dosovitskiy2021vit}); (E) \emph{task-expert} encoders trained for dense vision tasks (SAM~\citep{kirillov2023sam}). \textbf{Task recognition regime} partitions the five evaluation datasets one-to-one: \emph{Generic Object} (CIFAR-100), \emph{Fine-grained Object} (Pets), \emph{Texture / Material} (DTD), \emph{Geo-spatial / Scene} (Country211), and \emph{Structured Symbol / Digit} (GTSRB).

\paragraph{Reference-path justification.} The diagnostic reference path CLIP $\to$ DINOv2 $\to$ ConvNeXt $\to$ SigLIP $\to$ MAE $\to$ ResNet-50 is fixed a priori from published fusion practice and not optimised on the evaluation data. Its first three positions are the Cambrian-1~\citep{tong2024cambrian} trio (CLIP, DINOv2, ConvNeXt), reported there as a strong vision-centric combination~\citep{shi2024eagle,kar2024brave,zong2024mova}; the last three add a within-paradigm second representative each (SigLIP for language supervision, MAE for self-supervision, ResNet-50 for class supervision), so positions $5$--$6$ act as redundancy probes against positions $1$--$3$. Self-supervised purely-visual features (DINOv2) are added early to recover what contrastive pre-training misses~\citep{liu2023fromclip}.

\paragraph{Hardware.} All experiments are conducted on a single NVIDIA RTX 5090 GPU with 32GB VRAM.

\paragraph{Model pool.} We use 10 frozen pre-trained encoders spanning all five pre-training paradigms (consistent with Section~\ref{main-sec:experiments} of the main paper): CLIP-ViT-B/16~\citep{radford2021learning} and SigLIP-B/16~\citep{zhai2023sigmoid} (language-supervised); DINOv2-base~\citep{oquab2023dinov2}, ViT-MAE-base~\citep{he2022masked} and Data2Vec~\citep{baevski2022data2vec} (self-supervised); the Stable-Diffusion VAE encoder~\citep{rombach2022sdvae} (generative); ConvNeXt-base~\citep{liu2022convnet}, ResNet-50~\citep{he2016deep} and ViT-B/16~\citep{dosovitskiy2021vit} (class-supervised); and SAM-ViT-B~\citep{kirillov2023sam} (task-expert). The generative and task-expert encoders are reintroduced here as adversarial candidates that a competent selector should reject. Feature dimensions range from 256 (SD-VAE) to 2048 (ResNet-50).


\paragraph{Datasets.} Both the diagnostic protocol (Section~\ref{main-sec:phenomenon}, 6-encoder pool) and the main-results protocol (Section~\ref{main-sec:experiments}, 10-encoder pool) evaluate on the same five datasets, one per recognition regime: CIFAR-100 (100 classes, generic object), Oxford Pets (37 classes, fine-grained object), DTD (47 classes, texture), Country211 (211 classes, geo-spatial scene) and GTSRB (43 classes, structured symbol/digit). Oxford Flowers-102 additionally appears only in the fair 16-shot comparison against single-model CLIP adaptation (Table~\ref{tab:y:vs-single}), following the standard few-shot adaptation benchmark. All datasets are public benchmarks used under their original train/test splits.

\paragraph{Setting B (Concat + MLP).} Features from selected encoders are concatenated (no normalization) and fed to a 2-layer MLP: $\text{Linear}(\text{input\_dim} \to 512) \to \text{ReLU} \to \text{Dropout}(0.1) \to \text{Linear}(512 \to \text{num\_classes})$. Training: Adam optimizer, $\text{lr} = 10^{-3}$, weight decay $10^{-4}$, batch size $128$, $20$ epochs.

\paragraph{Selection cost.}
KAGES needs only the centred encoder kernels and a greedy traversal, with no downstream training. Computing the ten $n \times n$ centred kernels takes $\sim$5s on a single GPU, and the full greedy ordering (three Frobenius inner products per candidate per step, $O(n^2)$ and independent of encoder dimension) completes in under one second. The total selection cost is a few seconds, against hours for exhaustive evaluation of all $2^{10} - 1 = 1023$ subsets.

\paragraph{Reproducibility.} Main-table experiments use five random seeds $\{17, 23, 42, 71, 137\}$ and report mean $\pm$ standard deviation. Paired method comparisons use $10{,}000$ bootstrap resamples for $95\%$ confidence intervals.

\subsection{Risk decomposition (Theorem~\ref{thm:risk}) and proof}
\label{app-y:proof-risk}

For a selected set $S$, let
\[
\bar F_S := (F_m)_{m\in S}
\]
denote the joint collection of its encoder features. Let $R^\star(S)$ be the Bayes risk given $\bar F_S$, and let $R_n(S)$ be the expected population test risk of the predictor returned by the $n$-sample training procedure. Define the finite-sample excess risk
\[
E_n(S):=R_n(S)-R^\star(S)\ge0.
\]
This exact excess term may absorb estimation, approximation, and optimization effects of the specified downstream training procedure.

\begin{theorem}[Risk decomposition]
\label{thm:risk}
For a nested growth path $S_0\subset S_1\subset\cdots\subset S_K$, the change in finite-sample risk at step $k$ decomposes as
\[
R_n(S_k)-R_n(S_{k-1})
=
\Delta E_k-\delta_k^\star,
\]
where
\[
\Delta E_k:=E_n(S_k)-E_n(S_{k-1}),
\qquad
\delta_k^\star:=R^\star(S_{k-1})-R^\star(S_k)\ge0.
\]
\end{theorem}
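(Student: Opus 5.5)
The identity itself is pure algebra. The only substantive content is the sign claim $\delta_k^\star\ge0$.

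For the identity, I would use the definition $E_n(S):=R_n(S)-R^\star(S)$ to write $R_n(S)=R^\star(S)+E_n(S)$ for every set on the path. Subtracting the expressions for $S_k$ and $S_{k-1}$ gives
\[
R_n(S_k)-R_n(S_{k-1})=\bigl[E_n(S_k)-E_n(S_{k-1})\bigr]-\bigl[R^\star(S_{k-1})-R^\star(S_k)\bigr]=\Delta E_k-\delta_k^\star .
\]
No property of the training procedure is needed beyond $R_n(S)$ being a well-defined real number. Note that $E_n(S)\ge0$ holds automatically, because any predictor measurable in $\bar F_S$ has risk at least the Bayes risk; even so, the identity does not use this.

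For $\delta_k^\star\ge0$, I would argue that Bayes risk is monotone under enlargement of the conditioning information. Since $S_{k-1}\subset S_k$, the vector $\bar F_{S_{k-1}}$ is a coordinate projection of $\bar F_{S_k}$. Hence every decision rule $h(\bar F_{S_{k-1}})$ can be written as $h\circ p(\bar F_{S_k})$ with $p$ the projection. The class of admissible predictors for $S_k$ therefore contains that for $S_{k-1}$, and the infimum over the larger class is no larger: $R^\star(S_k)\le R^\star(S_{k-1})$. Equivalently, one can use $\sigma(\bar F_{S_{k-1}})\subseteq\sigma(\bar F_{S_k})$ and write the Bayes risk as $\mathbb{E}\bigl[\inf_{\hat y}\mathbb{E}[\ell(\hat y,Y)\mid\bar F_S]\bigr]$. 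Concavity of the pointwise infimum, together with the tower property, then gives the same inequality. For $S_0=\varnothing$, I take $R^\star(\varnothing)$ as the risk of the best constant predictor, consistent with the trivial $\sigma$-algebra.

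The main obstacle is only one of care. I need the Bayes risk defined as an infimum over measurable functions of the features, so that the inclusion of predictor classes is literal, and for general losses an infimum rather than an attained minimum. With that definition in place, no further conditions are needed. I would also remark that under log loss $\delta_k^\star$ equals $I(F_{m_k};Y\mid\bar F_{S_{k-1}})$, which recovers Eq.~\eqref{main-eq:gk-cmi}.
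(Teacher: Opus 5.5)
Your proposal is correct and matches the paper's proof: the identity comes from subtracting $R_n(S)=R^\star(S)+E_n(S)$ at $S_k$ and $S_{k-1}$. The sign claim $\delta_k^\star\ge0$ comes from noting that a rule based on $\bar F_{S_k}$ can ignore the added feature and so reproduce any rule based on $\bar F_{S_{k-1}}$; your extra care (Bayes risk as an infimum over measurable predictors, and the convention for $R^\star(\varnothing)$) just makes explicit what the paper leaves implicit.
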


\begin{proof}
Subtract
$R_n(S_{k-1})=R^\star(S_{k-1})+E_n(S_{k-1})$
from
$R_n(S_k)=R^\star(S_k)+E_n(S_k)$.
The non-negativity of $\delta_k^\star$ follows because
$\bar F_{S_{k-1}}$ is a subcollection of $\bar F_{S_k}$: a Bayes decision rule given $\bar F_{S_k}$ can ignore the newly added feature and reproduce any rule based on $\bar F_{S_{k-1}}$. The exact increment $\Delta E_k$ need not be non-negative.
\end{proof}

\subsection{Marginal-utility crossing: proof of Theorem~\ref{main-thm:peak}}
\label{app-y:proof-peak}

This appendix gives the derivation behind Theorem~\ref{main-thm:peak} of the main paper. Along a fixed ordering, write
\[
g_k:=R^\star(S_{k-1})-R^\star(S_k),
\qquad
c_k:=E_n(S_k)-E_n(S_{k-1}),
\qquad
a_k:=g_k-c_k.
\]

\paragraph{Step 1: one-step identity.}
For every $k$,
\[
R_n(S_k)-R_n(S_{k-1})=c_k-g_k=-a_k.
\]
This follows immediately by subtracting the two risk decompositions in Theorem~\ref{thm:risk}.

\paragraph{Step 2: log loss makes $g_k$ a conditional mutual information.}
Under log loss $\ell(q,y)=-\log q(y)$, the Bayes minimizer is the true conditional distribution and
\[
R^\star_{\log}(S)=H(Y\mid\bar F_S).
\]
Since $\bar F_{S_k}=(\bar F_{S_{k-1}},F_{m_k})$,
\[
g_{k,\log}
=
I(F_{m_k};Y\mid\bar F_{S_{k-1}}).
\]
The chain rule gives
\begin{equation*}
I(F_{m_k};Y\mid\bar F_{S_{k-1}})
=
\underbrace{I(F_{m_k};Y)}_{\text{relevance}}
-
\underbrace{I(F_{m_k};\bar F_{S_{k-1}})}_{\text{redundancy}}
+
\underbrace{I(F_{m_k};\bar F_{S_{k-1}}\mid Y)}_{\text{class-conditional adjustment}}.
\end{equation*}

\paragraph{Step 3: a submodular greedy path has non-increasing Bayes gains.}
Let $f(S):=I(\bar F_S;Y)$ and suppose $f$ is submodular along the path. If
\[
m_k\in\arg\max_{m\notin S_{k-1}}\Delta f(m\mid S_{k-1}),
\]
then
\[
g_{k+1}
=
\max_{m\notin S_k}\Delta f(m\mid S_k)
\le
\max_{m\notin S_k}\Delta f(m\mid S_{k-1})
\le
g_k.
\]
Thus submodularity plus greedy selection is one sufficient mechanism for the non-increasing-$g_k$ assumption in Theorem~\ref{main-thm:peak}.

\paragraph{Step 4: Gaussian OLS instance with an increasing pure-noise tail.}
Let $X\sim\mathcal N(0,I_M)$,
$Y=\sum_{j=1}^{r}\beta_jX_j+\varepsilon$ with
$\varepsilon\sim\mathcal N(0,\sigma^2)$,
$F_{m_j}(X)=X_j$, and $S_k=\{m_1,\ldots,m_k\}$. For an OLS probe trained on $n>k+1$ samples,
\[
\mathbb E\,R_n(S_k)
=
\nu_k^2\frac{n-1}{n-k-1},
\qquad
\nu_k^2:=\sigma^2+\sum_{j=k+1}^{r}\beta_j^2.
\]
Here $R^\star(S_k)=\nu_k^2$, so
\[
E_n(S_k)=\nu_k^2\frac{k}{n-k-1}.
\]
For every pure-noise addition $k>r$, $\nu_{k-1}^2=\nu_k^2=\sigma^2$, and hence
\[
c_k
=
E_n(S_k)-E_n(S_{k-1})
=
\sigma^2\frac{n-1}{(n-k)(n-k-1)}
>0.
\]
This quantity is strictly increasing with $k$ while $n>k+1$. Thus the Gaussian model gives an explicit increasing-cost tail; it does not assert that $c_k$ is non-decreasing for arbitrary signal-bearing additions. As a concrete full-path instance, take $r=1$ and $n>K+1$. Then
$g_1=\beta_1^2$, $c_1=\sigma^2/(n-2)$, and for every $k\ge2$, $g_k=0<c_k$. Therefore $\beta_1^2>\sigma^2/(n-2)$ yields a strict single crossing after the first step.

\paragraph{Step 5: proof of the conditional crossing theorem.}
Under the assumptions of Theorem~\ref{main-thm:peak}, $g_k$ is non-increasing and $c_k$ is non-decreasing, so $a_k=g_k-c_k$ is non-increasing. Let
\[
k^\star:=\max\{k:a_k>0\}
\]
and let $\eta:=\min_k|a_k|>0$ denote the no-tie margin. Then
$a_k\ge\eta>0$ for $k\le k^\star$ and
$a_k\le-\eta<0$ for $k>k^\star$; the sign change occurs between $k^\star$ and $k^\star+1$. By Step~1,
$R_n(S_k)-R_n(S_{k-1})=-a_k$, which is strictly negative up to $k^\star$ and strictly positive afterwards. This proves the asserted order-specific peak-then-decline path.
\hfill$\square$

\subsection{Exact CKA margin derivation}
\label{app-y:exact-margin}

We give the full derivation of Proposition~\ref{main-prop:exact-margin} in the main paper. Recall the five scalars
$A_S := \langle K_S^c, K_Y^c \rangle_F$,
$B_S := \langle K_S^c, K_S^c \rangle_F$,
$r_m := \langle K_m^c, K_Y^c \rangle_F$,
$h_m := \langle K_m^c, K_S^c \rangle_F$,
$q_m := \langle K_m^c, K_m^c \rangle_F$,
with $C = \|K_Y^c\|_F^2$. This derivation concerns the pure alignment utility $U$, equivalently $J$ with $\tau=0$.

\paragraph{One-step CKA after adding $m$.}
Adding $m$ to $S$ gives $K_{S \cup \{m\}}^c = K_S^c + K_m^c$, so
\[
A_{S \cup \{m\}} = A_S + r_m,
\qquad
B_{S \cup \{m\}} = B_S + 2 h_m + q_m,
\qquad
U(S \cup \{m\}) = \frac{A_S + r_m}{\sqrt{(B_S + 2 h_m + q_m)\, C}}.
\]

\paragraph{Positive-gain condition.}
Assume $B_S>0$ and $C>0$. Since the centred Gram matrices are positive semidefinite, $A_S,r_m,h_m,q_m\ge0$. Then $U(S \cup \{m\}) > U(S)$ is equivalent to
$\frac{A_S + r_m}{\sqrt{B_S + 2 h_m + q_m}} > \frac{A_S}{\sqrt{B_S}}$,
which (with $A_S \ge 0$, both sides non-negative) is equivalent to
$(A_S + r_m)^2 B_S > A_S^2 (B_S + 2 h_m + q_m)$,
i.e.\ $r_m^2 B_S + 2 A_S r_m B_S > A_S^2 (2 h_m + q_m)$. Treating this as a quadratic in $r_m$ and taking the positive root yields the threshold $r_m^\star = A_S(\sqrt{1 + (2 h_m + q_m)/B_S} - 1)$, which is the exact criterion of Proposition~\ref{main-prop:exact-margin}.

\subsection{Diagnostic-protocol scaling tables}
\label{app-y:scaling-table}

Table~\ref{tab:y:scaling} reports the full per-dataset accuracy curves that underlie Figure~\ref{main-fig:overview} of the main paper.

\begin{table}[h]
\centering
\caption{10-shot accuracy (\%) as encoders are incrementally added along the literature-tracked path (CLIP\,$\to$\,DINOv2\,$\to$\,ConvNeXt\,$\to$\,SigLIP\,$\to$\,MAE\,$\to$\,ResNet-50; gated fusion). These are the per-dataset curves underlying Figure~\ref{main-fig:overview} and Table~\ref{main-tab:fulldata}. \textbf{Bold}: peak. $\star$: CLIP alone is optimal.}
\label{tab:y:scaling}
\small
\begin{tabular}{lcccccc c}
\toprule
Dataset & 1 & 2 & 3 & 4 & 5 & 6 & Peak \\
\midrule
CIFAR-100 & 60.29 & 80.11 & \textbf{80.37} & 80.18 & 80.20 & 79.92 & 3 \\
Pets & 85.04 & 92.86 & \textbf{93.75} & 93.46 & 93.65 & 93.49 & 3 \\
DTD & 64.41 & 72.67 & 73.52 & 74.05 & 74.17 & \textbf{74.18} & 6 \\
Country211$\star$ & \textbf{13.50} & 9.85 & 9.91 & 9.82 & 9.71 & 9.58 & 1 \\
GTSRB$\star$ & \textbf{61.15} & 51.54 & 52.73 & 52.34 & 51.97 & 52.58 & 1 \\
\bottomrule
\end{tabular}
\end{table}

\subsection{Two failure modes (formal characterization)}
\label{app-y:proof-failure-modes}

The risk decomposition (Theorem~\ref{thm:risk}) shows that step $k$ degrades performance exactly when the excess-risk increment $c_k=\Delta E_k$ exceeds the Bayes-risk reduction $g_k=\delta_k^\star$. Two empirically distinct mechanisms can make this net utility negative.

\subsubsection*{Type I: Task-conditional redundancy.}

\begin{definition}[Task-conditional redundancy]
\label{def:y:type1}
Encoder $m_k$ is \emph{task-conditionally redundant} with respect to $S_{k-1}$ for label $Y$ if
\[
I(F_{m_k};Y\mid\bar F_{S_{k-1}})=0.
\]
\end{definition}

\begin{proposition}[Type~I removes the Bayes gain]
\label{prop:y:type1}
Under log loss, if $m_k$ is task-conditionally redundant with respect to $S_{k-1}$, then $g_k=\delta_k^\star=0$ and
\[
R_n(S_k)-R_n(S_{k-1})=\Delta E_k.
\]
Consequently, the step degrades performance exactly when $\Delta E_k>0$. The same conclusion holds for any Bayes decision loss whenever $Y$ is conditionally independent of $F_{m_k}$ given $\bar F_{S_{k-1}}$.
\end{proposition}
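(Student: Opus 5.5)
The plan is to combine the log-loss identification of the Bayes gain from Step~2 of App.~\ref{app-y:proof-peak} with the one-step identity of Theorem~\ref{thm:risk}. Under log loss, $R^\star_{\log}(S)=H(Y\mid\bar F_S)$. Since $\bar F_{S_k}=(\bar F_{S_{k-1}},F_{m_k})$, the chain rule for entropy gives
\[
\delta_k^\star=g_{k,\log}=H(Y\mid\bar F_{S_{k-1}})-H(Y\mid\bar F_{S_{k-1}},F_{m_k})=I(F_{m_k};Y\mid\bar F_{S_{k-1}}).
\]
Definition~\ref{def:y:type1} sets this quantity to zero, so $g_k=0$ immediately.

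Substituting $\delta_k^\star=0$ into Theorem~\ref{thm:risk} then yields $R_n(S_k)-R_n(S_{k-1})=\Delta E_k$. The "exactly when" clause is just the sign of this identity: the step strictly increases risk if and only if $\Delta E_k>0$.

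For the general Bayes decision loss, I will argue directly. Assume $Y\perp F_{m_k}\mid\bar F_{S_{k-1}}$. Then the posterior of $Y$ given $\bar F_{S_k}$ equals the posterior given $\bar F_{S_{k-1}}$ almost surely. The Bayes risk is $\mathbb E\bigl[\min_a \mathbb E[\ell(a,Y)\mid\bar F_S]\bigr]$, and the conditional expected loss depends on the conditioning only through the posterior. Hence the pointwise minimum over actions is the same function of $\bar F_{S_{k-1}}$ in both cases, and the outer expectations coincide. This gives $R^\star(S_k)=R^\star(S_{k-1})$, so $\delta_k^\star=0$, and the same substitution into Theorem~\ref{thm:risk} finishes the proof.

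The main obstacle is measure-theoretic care in this last step. I have to justify that the minimiser of the conditional risk can be chosen measurably, or at least that the infimum is attained up to $\epsilon$ and these near-minimisers can be chosen measurably. I also have to justify that conditional independence makes the regular conditional distributions agree almost surely. I would handle this by assuming standard Borel feature spaces, so that regular conditional distributions exist. With that in place, the monotonicity argument already used in Theorem~\ref{thm:risk} gives $\le$ in one direction, and the posterior equality gives $\ge$ in the other.
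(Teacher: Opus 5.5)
Your log-loss argument matches the paper's proof exactly: it writes $g_k=H(Y\mid\bar F_{S_{k-1}})-H(Y\mid\bar F_{S_k})=I(F_{m_k};Y\mid\bar F_{S_{k-1}})$, sets this to zero by the redundancy definition, and substitutes into the one-step identity.

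The difference is the last sentence of the statement. The paper's proof stops after the log-loss case and never argues the claim for a general Bayes decision loss. Your posterior-equality argument supplies it, and it is sound. Conditional independence makes the conditional law of $Y$ given $\bar F_{S_k}$ a version of the law given $\bar F_{S_{k-1}}$. Hence every action has the same conditional risk under both. Once the Bayes risk is written as the expected pointwise infimum, the two Bayes risks coincide. Your version also shows the general clause subsumes the log-loss one, since $I(F_{m_k};Y\mid\bar F_{S_{k-1}})=0$ is equivalent to that conditional independence.

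If you prefer to avoid measurable selection of near-minimisers, a randomisation argument also works. Take any rule $\delta(\bar F_{S_{k-1}},F_{m_k})$ and replace $F_{m_k}$ by an independent draw from its conditional law given $\bar F_{S_{k-1}}$. By conditional independence, the resulting triple with $Y$ has the same joint law, so the risk is unchanged. Fubini then yields a deterministic $\bar F_{S_{k-1}}$-measurable rule that is no worse. This needs only a regular conditional distribution (standard Borel spaces) and a loss bounded below, and the monotonicity inequality from the risk decomposition closes the other direction.
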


\begin{proof}
Under log loss,
\[
g_k
=
H(Y\mid\bar F_{S_{k-1}})
-
H(Y\mid\bar F_{S_k})
=
I(F_{m_k};Y\mid\bar F_{S_{k-1}})
=0.
\]
The one-step identity then gives
$R_n(S_k)-R_n(S_{k-1})=\Delta E_k$.
\end{proof}

\subsubsection*{Type II: Task--model misalignment.}

\begin{definition}[Task--model misalignment]
\label{def:y:type2}
For a tolerance $\epsilon\ge0$, encoder $m_k$ is \emph{$\epsilon$-task-misaligned} relative to $S_{k-1}$ if its conditional Bayes gain satisfies
\[
g_k
=
R^\star(S_{k-1})-R^\star(S_k)
\le\epsilon.
\]
Under log loss this is equivalent to
$I(F_{m_k};Y\mid\bar F_{S_{k-1}})\le\epsilon$. The candidate may nevertheless be representationally distinct according to a task-agnostic diversity criterion.
\end{definition}

\begin{proposition}[Type~II decline when excess cost dominates]
\label{prop:y:type2}
If $m_k$ is $\epsilon$-task-misaligned relative to $S_{k-1}$ and $c_k>\epsilon$, then
\[
R_n(S_k)-R_n(S_{k-1})
=
c_k-g_k
\ge
c_k-\epsilon
>0.
\]
\end{proposition}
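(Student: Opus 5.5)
The argument is a direct substitution into the one-step identity of Theorem~\ref{thm:risk}, which was restated as Step~1 in App.~\ref{app-y:proof-peak}. For the nested step $S_{k-1}\subset S_k$ that identity reads
\[
R_n(S_k)-R_n(S_{k-1})=c_k-g_k,
\qquad
g_k=\delta_k^\star=R^\star(S_{k-1})-R^\star(S_k),
\qquad
c_k=\Delta E_k.
\]
It holds for any decomposable loss, so no log-loss structure is needed for the main inequality.

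The first step is to invoke Definition~\ref{def:y:type2}. It gives the upper bound $g_k\le\epsilon$ on the conditional Bayes gain, which is equivalent to $-g_k\ge-\epsilon$. Adding $c_k$ to both sides yields the middle inequality $c_k-g_k\ge c_k-\epsilon$. The second step uses the hypothesis $c_k>\epsilon$ to conclude that $c_k-\epsilon>0$, which completes the chain. Where the definition is read in its log-loss form, $I(F_{m_k};Y\mid\bar F_{S_{k-1}})\le\epsilon$, I would first convert it into $g_k\le\epsilon$ using Step~2 of App.~\ref{app-y:proof-peak}, namely $g_{k,\log}=I(F_{m_k};Y\mid\bar F_{S_{k-1}})$, and then proceed exactly as above.

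There is no real obstacle. The only point needing care is to use the bound in the correct direction: $g_k\le\epsilon$ must appear in the form $-g_k\ge-\epsilon$. It is also worth noting that $g_k\ge0$ (Theorem~\ref{thm:risk}) is not needed for the result. I would add a one-line remark that representational distinctness plays no role in the argument, because only $g_k$ enters the identity and a task-agnostic diversity score has no effect on it.
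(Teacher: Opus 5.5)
Your proposal is correct and matches the paper's proof, which likewise combines the one-step identity $R_n(S_k)-R_n(S_{k-1})=c_k-g_k$ with the definitional bound $g_k\le\epsilon$ and the hypothesis $c_k>\epsilon$. Your side remarks (that $g_k\ge 0$ and representational distinctness play no role) are accurate.
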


\begin{proof}
The claim follows directly from the one-step identity and the definition $g_k\le\epsilon$.
\end{proof}

\subsection{Image retrieval: full Recall@1 curves}
\label{app-y:retrieval-full}

With the same KAGES ordering used in the main text, L2-normalized concatenated features, and nearest-neighbor search, we report Recall@1 as a function of $k$ on three datasets (Table~\ref{tab:y:retrieval}).

\begin{table}[h]
\centering
\caption{Image retrieval Recall@1 (\%) vs.\ $k$. Retrieval gains are task-dependent and saturate at larger $k$ than classification.}
\label{tab:y:retrieval}
\small
\begin{tabular}{lccccc cc}
\toprule
Dataset & $k = 1$ & 3 & 5 & 8 & 10 & Peak $k$ & Best R@1 \\
\midrule
DTD & 47.4 & 66.6 & 70.1 & 70.5 & \textbf{74.6} & 10 & 74.6 \\
GTSRB & 50.4 & 56.8 & 55.7 & 59.2 & \textbf{60.2} & 10 & 60.2 \\
Pets & 89.8 & 93.3 & 93.5 & \textbf{94.3} & 94.0 & 8 & 94.3 \\
\bottomrule
\end{tabular}
\end{table}

\subsection{Fair 16-shot comparison with single-model adapters}
\label{app-y:vs-single}

Table~\ref{tab:y:vs-single} compares multi-model fusion with recent single-model CLIP adaptation methods under a fair 16-shot setting.

\begin{table}[h]
\centering
\caption{Fair 16-shot comparison: multi-model fusion vs.\ single-model CLIP adaptation (\%). All methods use 16 shots per class. $\dagger$: from original papers.}
\label{tab:y:vs-single}
\small
\begin{tabular}{llcccc}
\toprule
& Method & DTD & Flowers & GTSRB & Avg \\
\midrule
\multirow{5}{*}{\rotatebox{90}{\tiny Single}}
& CLIP zero-shot$\dagger$ & 44.3 & 67.4 & -- & -- \\
& CoOp$\dagger$~\citep{zhou2022coop} & 68.7 & 95.7 & -- & -- \\
& Tip-Adapter-F$\dagger$~\citep{zhang2022tipadapter} & 73.7 & 97.2 & -- & -- \\
& TaskRes$\dagger$~\citep{yu2023taskres} & 73.4 & 97.0 & -- & -- \\
& CaFo$\dagger$~\citep{zhang2023cafo} & 74.6 & 97.5 & -- & -- \\
\midrule
\multirow{3}{*}{\rotatebox{90}{\tiny Multi}}
& All Models & 77.9 & 99.6 & 61.6 & 79.7 \\
& Random & 77.7 & 99.6 & 60.3 & 79.2 \\
& \textbf{KAGES} & \textbf{78.7} & \textbf{99.7} & \textbf{72.1} & \textbf{83.5} \\
\bottomrule
\end{tabular}
\end{table}

\subsection{Full leaderboard on Protocol B}
\label{app-y:tables-full-b}

The main paper reports KAGES against the standard multi-view defaults (full fusion, random, diversity) and summarises the two stronger subset-selection baselines (DPP, submodular) in the text. Table~\ref{tab:y:full-b} gives the complete per-dataset AULC for all six multi-view methods on Protocol B (16-shot, Concat+MLP), means over five seeds. KAGES is best on every dataset and on the average; the coverage-driven DPP and submodular objectives are the strongest competitors but still trail KAGES, since neither uses the label kernel.

\begin{table}[h]
\centering
\caption{Full per-dataset AULC (\%) on Protocol B (16-shot, Concat+MLP), means over 5 seeds $\{17,23,42,71,137\}$. \textbf{Bold}: best per column; \underline{underlined}: second-best.}
\label{tab:y:full-b}
\small
\setlength{\tabcolsep}{6pt}
\begin{tabular}{l ccccc c}
\toprule
Method & C100 & Pets & DTD & C211 & GTSRB & Avg \\
\midrule
Fuse-All        & 78.54          & \underline{94.06} & 75.35          & 10.16          & 57.18          & 63.06 \\
Random          & 77.06          & 92.97          & 74.87          & 11.10          & 59.16          & 63.03 \\
Diversity       & 80.61          & 90.91          & 74.38          & 11.27          & 59.12          & 63.26 \\
DPP             & \underline{83.62} & 93.21          & 72.48          & \underline{11.87} & 64.88          & 65.21 \\
Submodular      & 82.82          & 93.31          & \underline{77.64} & 11.34          & \underline{65.53} & \underline{66.13} \\
\textbf{KAGES (ours)} & \textbf{83.84} & \textbf{94.70} & \textbf{77.82} & \textbf{12.16} & \textbf{66.12} & \textbf{66.93} \\
\bottomrule
\end{tabular}
\end{table}

\subsection{Normalization ablation}
\label{app-y:normalization-ablation}

The main paper uses the raw additive variant $K_S^c = \sum_{s\in S} K_s^c$. A per-encoder Frobenius-normalised variant $\widetilde{K}_s^c := K_s^c / (\|K_s^c\|_F + \varepsilon)$ gives provable per-encoder scale invariance but yields slightly lower empirical AULC on our pool ($\Delta \mathrm{AULC} \approx -0.4$\,pp on Protocol B). We report it here as an ablation and use the raw additive variant in the main results.

\subsection{Inter-seed stability of AULC}
\label{app-y:tables-full}

Across the five seeds $\{17,23,42,71,137\}$, the inter-seed standard deviation of AULC is small and does not affect the method ranking: it is at most $0.3$\,pp for KAGES on every dataset and below $0.6$\,pp for all baselines, so the per-dataset margins reported in the main tables are stable across seeds.


\section{Cross-Modal Validation on Frozen LLMs}
\label{app-y:llm}

To test whether KAGES transfers beyond vision, we run a parallel experiment on \textbf{6 frozen base LLMs}: Qwen2.5-7B~\citep{yang2024qwen25}, Mistral-7B-v0.1, Llama-3-8B~\citep{dubey2024llama3}, DeepSeek-7B, Gemma-7B, and Yi-1.5-9B. For each model we extract the last-layer, last-token hidden state on MMLU-500 and ARC-Challenge-500 (both 4-way MCQA, treated as classification), then apply the exact same pipeline as our main experiments: concatenate features of the first $k$ selected models, train a linear probe on an 80/20 split, and report accuracy (Table~\ref{tab:y:llm}).

\begin{table}[h]
\centering
\caption{Linear-probe accuracy (\%) across $k$ on frozen LLMs. Peak-then-decline reproduces on both tasks. KAGES peaks earlier and higher on MMLU.}
\label{tab:y:llm}
\small
\begin{tabular}{ll cccccc c}
\toprule
Dataset & Method & $k = 1$ & 2 & 3 & 4 & 5 & 6 & Peak \\
\midrule
\multirow{3}{*}{MMLU-500}
 & \textbf{KAGES} & 67.0 & \textbf{75.0} & 69.0 & 72.0 & 71.0 & 67.0 & \textbf{75.0} \\
 & diversity-greedy & 67.0 & 66.0 & 70.0 & 69.0 & 71.0 & 68.0 & 71.0 \\
 & fixed order & 67.0 & 68.0 & 72.0 & 70.0 & 69.0 & 67.0 & 72.0 \\
\midrule
\multirow{3}{*}{ARC-Ch-500}
 & \textbf{KAGES} & 88.0 & 86.0 & 89.0 & \textbf{91.0} & 88.0 & 87.0 & \textbf{91.0} \\
 & diversity-greedy & 88.0 & 86.0 & 89.0 & 90.0 & 89.0 & 87.0 & 90.0 \\
 & fixed order & 85.0 & 88.0 & 89.0 & 85.0 & 91.0 & 89.0 & 91.0 \\
\bottomrule
\end{tabular}
\end{table}

\begin{figure}[h]
\centering
\includegraphics[width=0.95\linewidth]{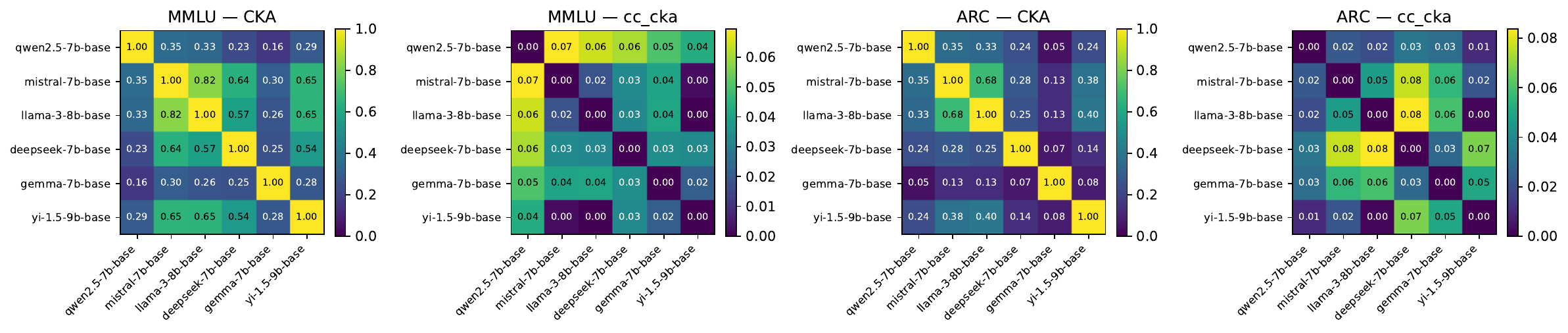}
\caption{\textbf{CKA matrices across 6 LLMs.} Mistral-Llama CKA reaches $0.82$ on MMLU (both English-heavy pretraining), while Gemma (Google, distilled) stays below $0.30$ with every other model. This echoes our vision finding: training corpus/objective, not architecture, drives representation similarity.}
\label{fig:y:llm-cka}
\end{figure}

\paragraph{Findings.}
(i) The peak-then-decline curve reproduces: MMLU peaks at $k = 2$ for KAGES ($75\%$) and at $k = 3$ to $5$ for the baselines, while ARC peaks at $k = 4$ to $5$. No configuration benefits from all 6 models. (ii) KAGES's three-term score beats diversity-greedy by up to $9$pp at $k = 2$ on MMLU, consistent with our main-paper observation that pure diversity is necessary but not sufficient. (iii) The CKA matrix supports the Platonic Representation Hypothesis~\citep{huh2024platonic} across modalities: Mistral and Llama, both English-dominant base models, cluster tightly, while Gemma's distillation-heavy training produces the most isolated representation.


\section{Future Work: Online and Sequential Extensions}
\label{app-y:future}

The present framework is offline: the labelled task is fixed at $n$ samples, the encoder pool is finite, and the selection is computed in a single pass. Two extensions are natural future directions.

\paragraph{Sequential / streaming labels.}
If labelled samples arrive sequentially, the ordering $\hpi$ and the optional score-selected size $\hat k_J$ can be recomputed as the empirical score path tightens. The sequential-analysis literature~\citep{tartakovsky2014sequential} provides change-point detection tools that could be adapted to detect significant shifts in the selection. Active best-arm identification~\citep{kaufmann2016complexity} provides sample-complexity templates for adaptive selection.

\paragraph{Online ordering and score-based size selection.}
The ordering and the optional score-selected size can be recomputed as new samples arrive. Under its stated assumptions, Theorem~\ref{thm:y:T1} implies stabilization of the \emph{ordering} after $O((B^2+\sigma^2[\log M+\log(1/\epsilon)])/G^2)$ samples with failure probability at most $\epsilon$; it does not by itself prove stabilization or accuracy optimality of $\hat k_J$. An online variant could expose intermediate orderings and score paths with confidence intervals. The connection to bandit best-arm identification~\citep{evendar2002pac,evendar2006action,kaufmann2016complexity} is natural and we leave the formal analysis to a follow-up.

\end{document}